\newtheorem{theorem}{Theorem}
\newtheorem{problem}{Problem}
\newtheorem{definition}{Definition}
\newtheorem{remark}{Remark}
\newcommand{\norm}[1]{\left\lVert#1\right\rVert}
\newcommand{\tabincell}[2]{\begin{tabular}{@{}#1@{}}#2\end{tabular}}
\let\oldIEEEkeywords\IEEEkeywords
\def\IEEEkeywords{\oldIEEEkeywords\normalfont\bfseries\ignorespaces}
\newcommand{\argmin}{\mathrm{arg}\min}
\begin{document}
\pagenumbering{gobble}
	
	\title{Controller Synthesis for Multi-Agent Systems with Intermittent Communication and Metric Temporal Logic Specifications}
	
	\author{Zhe~Xu, Federico M. Zegers, Bo Wu, Alexander J. Phillips, Warren Dixon and Ufuk Topcu
		\thanks{Zhe~Xu is with the School for Engineering of Matter, Transport and Energy, Arizona State University, Tempe, AZ 85287, Federico M. Zegers and Warren Dixon are with the Department of Mechanical
        and Aerospace Engineering, University of Florida, Gainesville, Florida 32611, Bo Wu is with the Oden Institute
		for Computational Engineering and Sciences, University of Texas, Austin, Austin, TX 78712, Alexander J. Phillips is with the Department
		of Electrical and Computer Engineering, University of Texas,
			Austin, Austin, TX 78712, Ufuk Topcu is with the Department
			of Aerospace Engineering and Engineering Mechanics, and the Oden Institute
			for Computational Engineering and Sciences, University of Texas,
			Austin, Austin, TX 78712, e-mail: xzhe1@asu.edu, fredzeg@ufl.edu, bwu3@utexas.edu, ajp3777@utexas.edu, wdixon@ufl.edu, utopcu@utexas.edu. This research is supported in part by AFOSR award numbers FA9550-18-1-0109 and FA9550-19-1-0169, and NEEC award number N00174-18-1-0003. }     
	}


	\maketitle
	
\begin{abstract} 
This paper investigates the controller synthesis problem for a multi-agent system (MAS) with
intermittent communication. We adopt a \textit{relay-explorer} scheme, where a mobile \textit{relay agent} with absolute position sensors switches among
a set of \textit{explorers} with relative position sensors to
provide intermittent state information. We model the MAS as a switched system where the explorers' dynamics can be either fully-actuated or under-actuated. The objective of the explorers is to reach \textit{approximate consensus} to a predetermined goal region. To guarantee the stability of the switched system and the approximate consensus of the explorers, we derive \textit{maximum dwell-time conditions} to constrain the length of time each explorer goes without state feedback (from the relay agent). Furthermore, the relay agent needs to satisfy practical constraints such as charging its battery and staying in specific regions of interest. Both the maximum dwell-time conditions and these practical constraints can be expressed by \textit{metric temporal logic} (MTL) specifications. We iteratively compute the optimal control inputs for the relay agent to satisfy the MTL specifications, while guaranteeing stability and approximate consensus of the explorers. We implement the proposed method on a case study with the CoppeliaSim robot simulator.
\end{abstract}     
	
\section{Introduction}
\label{sec_intro}
    
Traditionally, coordination strategies for multi-agent systems
(MAS) have been designed under the assumption that
state feedback is continuously available and each
agent can continuously communicate with its neighbors over
a network.
This assumption is often impractical, especially in mobile robot applications
where shadowing and fading in the wireless communication can cause unreliability, and each agent has limited energy resources \cite{goldsmith2005wireless}. 
	
Due to these constraints, there is a strong interest in developing
MAS coordination methods that rely on intermittent information over a communication network. In \cite{Wang.Lemmon2009,Meng.Chen2013,Cheng.Kan.ea2017,Li.Liao.ea2015,Heemels.Donkers2013,tabuada2007event}, the authors develop \textit{event-triggered} and \textit{self-triggered} controllers to only utilize sampled
	data from networked agents when triggered by conditions that ensure
	desired stability and performance properties. However, these results usually
	require a network represented by a strongly connected graph to enable agent coordination.
In \cite{Zegers.Chen.ea2019}, the authors provided a framework where a
set of \textit{explorers} operating with inaccurate position sensors 
are able to reach consensus at a desired state while a \textit{relay agent} intermittently
provides each explorer with state information. By introducing a
relay agent, the explorers are able to perform their tasks without
the need to perform additional maneuvers to obtain state information.
	
Building on the work of \cite{Zegers.Chen.ea2019,Chen2019}, we adopt a \textit{relay-explorer} scheme, where the MAS is modeled as a switched system. As an illustrative example shown in Fig. \ref{fig_intro}, the three explorers need to reach \textit{approximate consensus} to the green goal region and one relay agent provides intermittent state information to each explorer. To guarantee stability of the switched system and approximate consensus of the explorers, we derive \textit{maximum dwell-time conditions} to constrain the intervals between consecutive time instants at which the relay agent should provide state information to the same explorer. 

The maximum dwell-time conditions can be encoded by \textit{metric temporal logic} (MTL) specifications \cite{Ouaknine2005}. Such specifications have also been used in robotic applications for time-related specifications \cite{zhe_ijcai2019}. Since the relay agent is typically more energy-consuming due to high-quality communication and mobility equipment, the relay agent is likely required to satisfy additional MTL specifications for practical constraints such as charging its battery and staying in specific regions. 
In the example shown in Fig. \ref{fig_intro}, the relay agent needs to satisfy an MTL specification ``\textit{reach the charging station $G_1$ or $G_2$ in every 6 time units and always stay in the purple region $D$}''. 
	
We design the explorers' controllers such that the guarantees on the stability of the switched system and approximate consensus of the explorers hold, provided that the maximum dwell-time conditions are satisfied. Then, we synthesize the relay agent's controller to satisfy the MTL specifications that encode the maximum dwell-time conditions and the additional practical constraints. There is a rich literature on controller synthesis subject to temporal logic specifications \cite{KHFP,Nok2012,BluSTL,sayan2016,zhe_advisory,zhe_control,liu2020distributed}. For linear or switched linear systems, the controller synthesis problem can be converted into a mixed-integer linear programming (MILP) problem~\cite{BluSTL,sayan2016}. Additionally, as the explorers are equipped with 
relative position sensors, we design an observer to estimate the explorers' states, which can jump due to the provision of intermittent state feedback via communication. Therefore, we solve the MILP problem iteratively to account for such abrupt changes. 


This paper provides additional insights and generalizes our previous work in \cite{Xu2019Allerton}. (a) The proposed approach in \cite{Xu2019Allerton} only applies to fully-actuated or over-actuated dynamics for the explorers, while we extend the approach to under-actuated dynamics (e.g., unicycle dynamics) for the explorers in this paper. (b) We used both maximum and minimum dwell-time conditions
to achieve stability and approximate consensus for the explorers in \cite{Xu2019Allerton}, while in this paper we only rely on the maximum dwell-time conditions, i.e., the minimum dwell-time conditions are not necessary to enable the result. (c) This paper provides additional evidence of the approach through CoppeliaSim robot simulators with multiple MTL specifications in the case studies.
	
We implemented the proposed method on a simulation case study with three mobile robots as the explorers and one quadrotor as the relay agent. The results in two different scenarios show that the synthesized controller can lead to satisfaction of the MTL specifications, while ensuring the stability of the switched system and achieving the approximate consensus objective.

	\begin{figure}
		\centering
		\includegraphics[scale=0.3]{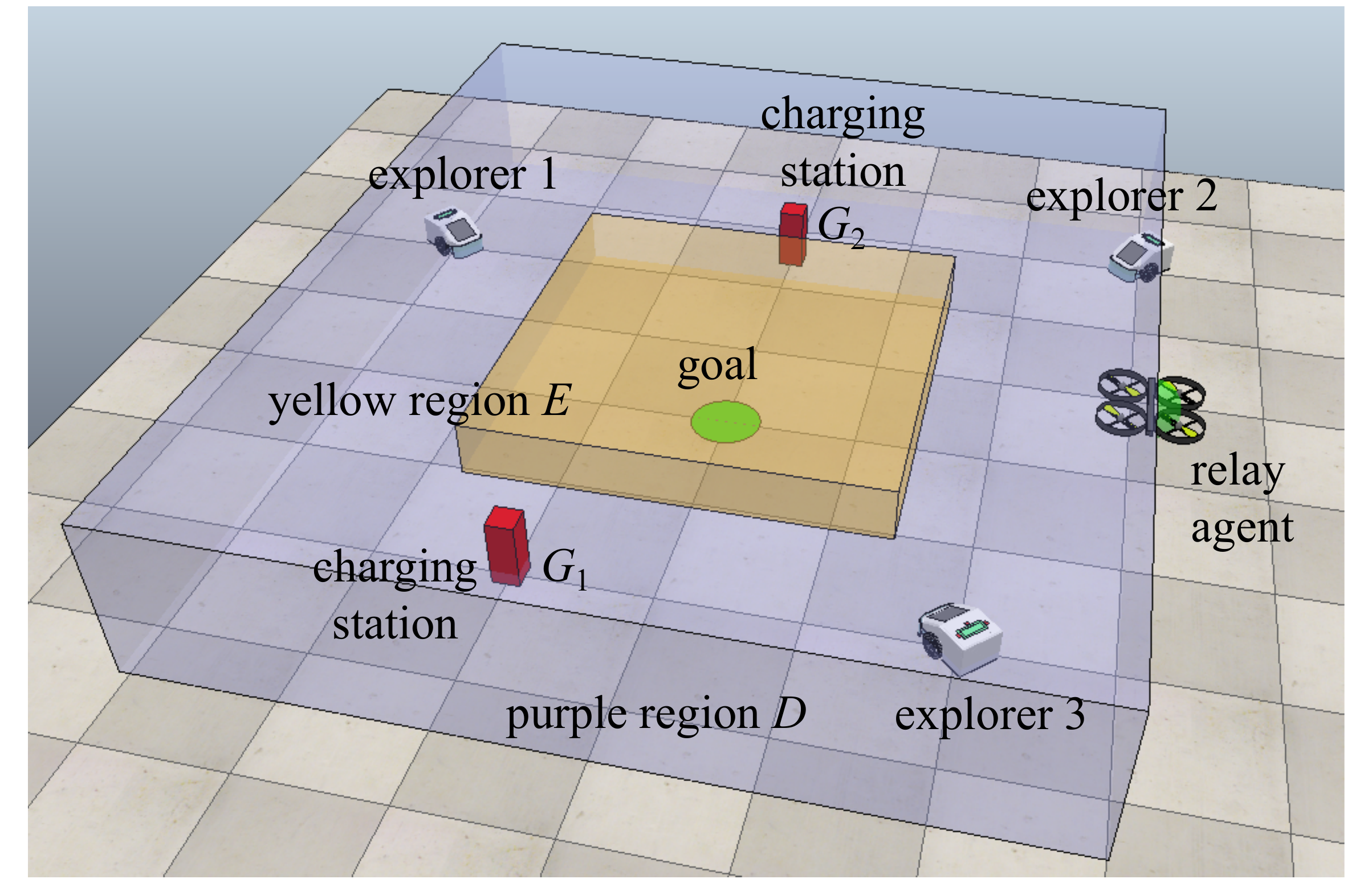}
		\caption{Illustrative example of an MAS with a relay agent (quadrotor) and three explorers (mobile robots).}  
		\vspace{-6mm}
		\label{fig_intro}
	\end{figure}

\section{Problem Formulation}

\subsection{Agent Dynamics}
\label{sec_dynamics}

Consider a multi-agent system (MAS) consisting of $Q$ explorers ($Q\in\mathbb{Z}_{>0}$) indexed by $i\in F\triangleq\left\{ 1,...,Q\right\}$
and a relay agent indexed by $0$. Let the time set be $\mathbb{T} \textcolor{black}{\triangleq} \mathbb{R}_{\ge0}$.
Let $y_{0},\text{ }y_{i}: \mathbb{T}\rightarrow\mathbb{R}^{z}$
denote the position of the relay agent and explorer $i$, respectively. Let $x_{0}: \mathbb{T}\rightarrow\mathbb{R}^{l}$ and $x_{i}: \mathbb{T}\rightarrow\mathbb{R}^{m}$
denote the state of the relay agent and explorer $i$, respectively.
The \textcolor{black}{known} linear time-invariant dynamics of the relay agent and explorer $i$
are 
\begin{align}
\begin{split}
\dot{x}_{0}\left(t\right) & =  A_0x_{0}\left(t\right)+B_0u_{0}\left(t\right), \\
 y_{0}\left(t\right) & =  C_0x_{0}\left(t\right), \\
\dot{x}_{i}\left(t\right) & =  Ax_{i}\left(t\right)+Bu_{i}\left(t\right)+d_{i}\left(t\right), \\
 y_{i}\left(t\right) & = Cx_{i}\left(t\right),
 \end{split}
\label{explorer Dynamics}
\end{align}
where $A_0\in\mathbb{R}^{l\times l}, A\in\mathbb{R}^{m\times m}$, $B_0\in\mathbb{R}^{l\times n}, B\in\mathbb{R}^{m\times n}$, $C_0\in\mathbb{R}^{z\times l}$, and $C\in\mathbb{R}^{z\times m}$. Here, $u_{0},\text{ }u_{i}: \mathbb{T}\rightarrow\mathbb{R}^{n}$
denote the control inputs of the relay agent and explorer $i$, respectively,
and $d_{i}: \mathbb{T}\rightarrow\mathbb{R}^{m}$ denotes an
exogenous disturbance that is \textcolor{black}{continuous and} bounded,
i.e., $\left\Vert d_{i}\left(t\right)\right\Vert \leq\overline{d}_{i}$
for all $t\geq0$ ($\overline{d}_{i}\in\mathbb{R}_{>0}$ is a
known constant)\footnote{$\left\Vert\cdot\right\Vert $ denotes the 2-norm.}. We assume that
the pair $(A,B)$ is stabilizable.

\subsection{Communication}
\label{sec_sensing}
Each explorer is equipped with a relative position sensor and hardware to enable communication with \textcolor{black}{other agents, e.g., the relay agent and a goal region}.
Since the explorers lack absolute position sensors, they are
not able to localize themselves within the global coordinate system.
Nevertheless, the explorers can use their relative position sensors
to enable self-localization relative to their initially known locations. However,
relative position sensors\textcolor{black}{,} like encoders and inertial measurement
units (IMUs)\textcolor{black}{,} can produce unreliable position information since wheels of mobile robots
may slip and IMUs may generate noisy data. Hence, the $d_{i}\left(t\right)$
term in (\ref{explorer Dynamics}) models the inaccurate position
measurements from the relative position sensor of explorer $i$ as
well as any external influences from the environment.
Navigation through the use of a relative position sensor results in
dead-reckoning, which becomes increasingly inaccurate with time
if not corrected. On the other hand, the relay agent is equipped with an absolute
position sensor and hardware to enable communication
with each explorer. Unlike a relative position sensor, an absolute
position sensor allows localization of the agent within the global coordinate
system.

\textcolor{black}{Let $x_{g}\in\mathbb{R}^{m}$ be a predetermined state defined by the user.} A goal region (see Fig. \ref{fig_intro}) centered at the position $Cx_{g}\in\mathbb{R}^{z}$ with radius $R_{f}\in\mathbb{R}_{>0}$ is capable of providing state information to each explorer $i\in F$
once $\left\Vert y_{i}\left(t\right)-Cx_{g}\right\Vert=\left\Vert Cx_{i}\left(t\right)-Cx_{g}\right\Vert \leq R_{f}$. 

Let $R\in\mathbb{R}_{>0}$
denote the communication radius of the relay agent and each explorer. Within this work, the relay agent has full knowledge of its own state $x_{0}\left(t\right)$ for all $t\geq0$ and the initial state $x_{i}\left(0\right)$ for all $i\in F$. The relay agent provides state information to explorer $i$ (i.e., \textit{services} explorer $i$) if and only if $\left\Vert y_{i}\left(t\right)-y_{0}\left(t\right)\right\Vert \leq R$ and the communication channel of explorer $i$ is on. We define the \textit{communication switching signal} $\zeta_i$ for explorer $i$ as $\zeta_i=1$ if the communication channel is on for explorer $i$\textcolor{black}{,} and $\zeta_i=0$ if the communication channel is off for explorer $i$. We use $t_{s}^{i}\ge0$ to indicate the $s^{\text{th}}$
servicing instance for explorer $i$. The $(s+1)^{\text{th}}$ servicing instance for explorer $i$ is defined as\footnote{For $s=0,$ $t_{0}^{i}$ is the initial time. For simplicity, we take $t_{0}^{i}=0.$}  
\begin{equation}\nonumber
t_{s+1}^{i}\triangleq \inf\left\{ t > t_{s}^{i}: (\Vert y_{i}\left(t\right)-y_{0}\left(t\right)\Vert \leq R) \land (\zeta_i(t)=1)\right\}, 
\end{equation}
where $\land$ denotes the \textit{conjunction} logical connective.

\subsection{Approximate Consensus}
Given a goal region centered at $Cx_g$ with radius $R_f$, one objective is to design distributed controllers for all explorers that achieve approximate consensus within the goal region. This objective is decomposed into two tasks, where it is the task of the explorers to dead-reckon towards $Cx_g$, and it is the task of the relay agent to intermittently service each explorer. 

Since a single relay agent must intermittently service $Q$ explorers, the MAS can be modeled as a switched system, where the relay agent has $2^Q$ modes of operation, i.e., the relay agent can service no, a single, multiple, or all explorers at an instance depending on the configuration of the explorers. Let $\sigma:\mathbb{T}\to 2^F$ be a piece-wise constant switching signal that determines the mode of operation for the relay agent, where $2^F$ denotes the power set of $F$. The switching signal $\sigma$ also determines the servicing times for all explorers, i.e., $\{t_s^i\}_{s=0}^{\infty}$. To quantify the objective, let the tracking error $e_{i}:\mathbb{T}\rightarrow\mathbb{R}^{m}$ of explorer $i$ be defined as 
\begin{equation}
    e_{i}\left(t\right)\triangleq x_{g}-x_{i}\left(t\right).
\label{E: ei}
\end{equation}
To facilitate the analysis, let the state estimation error $e_{1,i}:\mathbb{T}\to\mathbb{R}^m$ be defined as 
\begin{equation}
e_{1,i}\left(t\right)\triangleq\hat{x}_{i}\left(t\right)-x_{i}\left(t\right),
\label{e1i}
\end{equation}
where $\hat{x}_i:\mathbb{T}\to\mathbb{R}^m$ denotes the state estimate of explorer $i$. For each $i\in F$, the state estimate of explorer $i$ is synchronized between explorer $i$ and the relay agent. Let the estimated tracking error $e_{2,i}:\mathbb{T}\to\mathbb{R}^m$ be defined as 
\begin{equation}
e_{2,i}\left(t\right)\triangleq x_{g}-\hat{x}_{i}\left(t\right).  
\label{e2i}
\end{equation} 
Using \eqref{e1i} and \eqref{e2i}, \eqref{E: ei} can be alternatively expressed as
\begin{equation}
    e_{i}(t)=e_{2,i}(t)+e_{1,i}(t).
\label{E: ei Alternative Form}
\end{equation}
Given the tracking error in \eqref{E: ei}, approximate consensus is achieved within the goal region whenever
\begin{equation*}
    \underset{t\to\infty}{\text{lim sup }} \Vert e_i(t) \Vert \leq \frac{R_f}{S_{\max}(C)} \quad \forall i\in F\cup\{0\},
\end{equation*}
where $S_{\max}(C)\in\mathbb{R}_{> 0}$ denotes the maximum singular value of $C$. 

\subsection{\textcolor{black}{State Observer and Controller Development}}
The state estimate of explorer $i\in F$ is generated by the following model-based observer
\begin{align}
\begin{split}
\dot{\hat{x}}_{i}\left(t\right) & \triangleq  -Ae_{2,i}\left(t\right)+Bu_{i}\left(t\right), \text{ }t\in\left[t_{s}^{i},t_{s+1}^{i}\right),\\
\hat{x}_{i}\left(t_{s}^{i}\right) & \triangleq x_{i}\left(t_{s}^{i}\right), 
\end{split}
\label{Reset}
\end{align}
where the position estimate $\hat{y}_i:\mathbb{T}\to\mathbb{R}^z$ of explorer $i$ is defined as \vspace{-0.05in}
\begin{align}
\begin{split}
\hat{y}_{i}\left(t\right) & \triangleq C\hat{x}_{i}\left(t\right).
\end{split}
\label{Reset_y}
\end{align}
The state estimate $\hat{x}_{i}(t)$ is initialized as $\hat{x}_{i}\left(0\right)=x_{i}\left(0\right)$ for all $i\in F$. Note that at each servicing instance $t_{s}^{i}$, the state estimate $\hat{x}_{i}(t)$ of explorer $i$ is reset to $x_{i}(t)$ as outlined in \eqref{Reset}. The controller of explorer $i$ is defined as
\begin{equation}
u_{i}\left(t\right)\triangleq B^{\textrm{T}}P e_{2,i}\left(t\right), 
\label{explorer Controller}
\end{equation}
where $P\in\mathbb{R}^{m\times m}$ is the positive definite solution to the Algebraic Riccati Equation (ARE) given by  
\begin{equation}
    A^{\textrm{T}}P+PA-2PBB^{\textrm{T}}P+kI_{m}=0_{m\times m}
\label{ARE}
\end{equation}
such that $k>0$ is a user-defined parameter, $I_m$ denotes the $m\times m$ identity matrix, and $0_{m\times m}$ denotes the $m\times m$ zero matrix.  Substituting \eqref{explorer Dynamics} and \eqref{Reset} into the time derivative of \eqref{e1i} yields
\begin{equation}
\begin{aligned}
\dot{e}_{1,i}\left(t\right) & =Ae_{1,i}\left(t\right)-Ax_{g}-d_{i}\left(t\right),\text{ }t\in\left[t_{s}^{i},t_{s+1}^{i}\right),\\
e_{1,i}\left(t_{s}^{i}\right) & =0_{m},
\end{aligned}
\label{e1i Dot closed-loop}
\end{equation}
where $0_m\in\mathbb{R}^m$ denotes the $m$-dimensional zero vector. Substituting \eqref{Reset} and \eqref{explorer Controller} into the time derivative of \eqref{e2i} yields
\begin{equation}
\begin{aligned}
\dot{e}_{2,i}\left(t\right) & =\left(A-BB^{\textrm{T}}P\right)e_{2,i}\left(t\right),\text{ }t\in\left[t_{s}^{i},t_{s+1}^{i}\right),\\
e_{2,i}\left(t_{s}^{i}\right) & =x_{g}-x_{i}\left(t_{s}^{i}\right).
\end{aligned}
\label{e2i Dot closed-loop}
\end{equation}
Substituting \eqref{explorer Dynamics}, \eqref{E: ei Alternative Form}, and \eqref{explorer Controller} into the time derivative of \eqref{E: ei} yields
\begin{equation}
\begin{aligned}
    \dot{e}_{i}(t) &=\left(A-BB^{\textrm{T}}P\right)e_{i}\left(t\right)+BB^{\textrm{T}}Pe_{1,i}\left(t\right) \\
	               &-Ax_{g}-d_{i}\left(t\right).
\end{aligned}
\label{E: ei closed-loop dynamics}
\end{equation}
The state $x_i(t)$ is continuous given the motion model in \eqref{explorer Dynamics}. Hence, \eqref{E: ei} implies $e_i(t)$ is continuous. From \eqref{e1i} and \eqref{Reset}, $e_{1,i}(t)$ is piece-wise continuous. Since the disturbance acting on explorer $i$ is continuous, $e_i(t)$ is continuous, and $e_{1,i}(t)$ is piece-wise continuous, the RHS of \eqref{E: ei closed-loop dynamics} is piece-wise continuous. Hence, $e_i(t)$ is piece-wise continuously differentiable, and therefore, locally Lipschitz. 

\subsection{Metric Temporal Logic (MTL)}
\label{MTL}   
To achieve stability of the switched system and approximate consensus
of the explorers while satisfying the practical constraints
of the relay agent, the requirements of the MAS can be specified in MTL specifications (see details in Section \ref{sec_reactive}). In this subsection, we briefly review MTL interpreted
over discrete-time trajectories~\cite{FainekosMTL}. 
The domain of the position $y$ of a certain agent is denoted by $\mathcal{Y}\subset\mathbb{R}^z$. The Boolean domain is $\mathbb{B} = \{\textrm{True}, \textrm{False}\}$, and the time index set
is $\mathbb{I} = \{0,1,\dots\}$. With slight abuse of notation, we use $y$ to denote the discrete-time trajectory as a function from $\mathbb{I}$ to $\mathcal{Y}$. A set $AP$ is a set of atomic propositions, each of which maps $\mathcal{Y}$ to $\mathbb{B}$. The syntax of MTL is defined recursively as
\[
\phi:=\top\mid \pi\mid\lnot\phi\mid\phi_{1}\wedge\phi_{2}\mid\phi_{1}\vee
\phi_{2}\mid\phi_{1}\mathcal{U}_{\mathcal{I}}\phi_{2}
\]
where $\top$ stands for the Boolean constant True, $\pi\in AP$ is an atomic 
proposition, $\lnot$ (negation), $\wedge$ (conjunction), $\vee$ (disjunction) 
are standard Boolean connectives, $\mathcal{U}$ is a temporal operator
representing \textquotedblleft until\textquotedblright~and $\mathcal{I}$ is a time interval of
the form $\mathcal{I}=[j_{1},j_{2}]$ ($j_1\le j_2$, $j_1, j_2\in\mathbb{I}$). We
can also derive two useful temporal operators from \textquotedblleft
until\textquotedblright~($\mathcal{U}$), which are \textquotedblleft
eventually\textquotedblright~$\Diamond_{\mathcal{I}}\phi\triangleq\top\mathcal{U}_{\mathcal{I}}\phi$ and
\textquotedblleft always\textquotedblright~$\Box_{\mathcal{I}}\phi\triangleq\lnot\Diamond_{\mathcal{I}}\lnot\phi$. We define the set of states that satisfy the atomic proposition $\pi$ as $\mathcal{O}(\pi)\subset \mathcal{Y}$. 

Next, we introduce the Boolean semantics of MTL for trajectories of finite length in the strong and the weak view, which are modified from the literature of temporal logic model checking and monitoring \cite{Eisner2003,KupfermanVardi2001,Ho2014}.  We use $t[j]\in\mathbb{T}$ to denote the time instant at time index $j\in\mathbb{I}$ and $y^j\triangleq y(t[j])$ to denote the value of $y$ at time $t[j]$. In the following, $(y^{0:H},j)\models_{\rm{S}}\phi$ (resp. $(y^{0:H},j)\models_{\rm{W}}\phi$)
means the trajectory $y^{0:H}\triangleq y^0\dots y^H$ $(H\in\mathbb{Z}_{\ge0})$ strongly (resp. weakly) satisfies $\phi$ at time index $j$, $(y^{0:H},j)\not\models_{\rm{S}}\phi$ (resp. $(y^{0:H},j)\not\models_{\rm{W}}\phi$)
means $y^{0:H}$ fails to strongly (resp. weakly) satisfy $\phi$ at time index $j$. 

\begin{definition}
	The Boolean semantics of MTL for trajectories of finite length in the strong view is defined recursively as follows~\cite{zhe_advisory}:
	\[
	\begin{split}
	(y^{0:H},j)\models_{\rm{S}}\pi~\mbox{iff}~& j\le H~\mbox{and}~y^j\in\mathcal{O}(\pi),\\
	(y^{0:H},j)\models_{\rm{S}}\lnot\phi~\mbox{iff}~ & (y^{0:H},j)\not\models_{\rm{W}}\phi,\\
	(y^{0:H},j)\models_{\rm{S}}\phi_{1}\wedge\phi_{2}~\mbox{iff}~ &  (y^{0:H},j)\models_{\rm{S}}\phi
	_{1}~\\& ~\mbox{and}~(y^{0:H},j)\models_{\rm{S}}\phi_{2},\\
	(y^{0:H},j)\models_{\rm{S}}\phi_{1}\mathcal{U}_{\mathcal{I}}\phi_{2}~\mbox{iff}~ &  \exists
	j^{\prime}\in j+\mathcal{I}, \mbox{s.t.} (y^{0:H},j^{\prime})\models_{\rm{S}}\phi_{2},\\
	&   (y^{0:H},j^{\prime\prime})\models_{\rm{S}}\phi_{1} \forall j^{\prime\prime}\in\lbrack j,j^{\prime}).
	\end{split}
	\]
	\label{strong}
\end{definition}

 \begin{definition}
	The Boolean semantics of MTL for trajectories of finite length in the weak view is defined recursively as follows~\cite{zhe_advisory}:
	\begin{align}\nonumber
	\begin{split}
	(y^{0:H},j)\models_{\rm{W}}\pi~&\mbox{iff}~ \textrm{either of the following holds}:\\
	& 1)~j\le H~\mbox{and}~y^j\in\mathcal{O}(\pi); ~2)~j>H,\\	
	(y^{0:H},j)\models_{\rm{W}}\lnot\phi~&\mbox{iff}~  (y^{0:H},j)\not\models_{\rm{S}}\phi,\\
	(y^{0:H},j)\models_{\rm{W}}\phi_{1}\wedge\phi_{2}~&\mbox{iff}~   (y^{0:H},j)\models_{\rm{W}}\phi
	_{1}~\\&~\mbox{and}~(y^{0:H},j)\models_{\rm{W}}\phi_{2},\\	
	(y^{0:H},j)\models_{\rm{W}}\phi_{1}\mathcal{U}_{\mathcal{I}}\phi_{2}~&\mbox{iff}~  \exists
	j^{\prime}\in j+\mathcal{I}, \mbox{s.t.} (y^{0:H},j^{\prime})\models_{\rm{W}}\phi_{2},\\
	&  (y^{0:H},j^{\prime\prime})\models_{\rm{W}}\phi_{1} \forall j^{\prime\prime}\in\lbrack j, j^{\prime}).
	\label{weak}
	\end{split}
	\end{align}
\end{definition}

Intuitively, if a trajectory of finite length can be extended to infinite length, then the strong view indicates that the truth value of the formula on the infinite-length trajectory is already ``determined'' on the trajectory of finite length, while the weak view indicates that it may not be ``determined'' yet \cite{Ho2014}. As an example, a trajectory $y^{0:3}=y^0y^1y^2y^3$ is not possible to strongly satisfy $\phi=\Box_{[0,5]}\pi$ at time 0, but $y^{0:3}$ is possible to strongly violate $\phi$ at time 0, i.e., $(y^{1:3},0)\models_{\rm{S}}\lnot\phi$ is possible.

For an MTL formula $\phi$, the necessary length $L(\phi)$ is defined recursively as follows \cite{Maler2004}: 
\[                                                                   
\begin{split}
&L(\pi) :=0, ~L(\lnot\phi) :=L(\phi),L(\phi_{1}\wedge\phi_{2}):=\max(L(\phi_{1}),L(\phi_{2})),\\
&L(\phi_1\mathcal{U}_{[j_1, j_2]}\phi_{2}):=\max(L(\phi_{1}),L(\phi_{2}))+j_2.
\end{split}
\]   

\subsection{Problem Statement}
\label{sec_problem}
We now present the problem formulation for the control of the MAS with intermittent communication and MTL specifications.

\begin{problem}
	Design the control inputs for the relay agent $\mathbf{u}_0 = [u^0_0, u^1_0, \cdots]$ ($u^j_0$ denotes the control input at time index $j$) such that the following characteristics are satisfied while minimizing the control effort $\sum^{\infty}_{j=0}\norm{u^j_0}$:\\
	\textit{Correctness}: A given MTL specification $\phi$ is weakly satisfied by the trajectory of the relay agent.\\
	\textit{Stability}: The error signal $e_{1,i}\left(t\right)$ is uniformly bounded for each $i\in F$.\\
	\textit{Approximate Consensus}: The states of the explorers reach approximate consensus within the goal region centered at $Cx_g$ with radius $R_f$.
	\label{problem}
\end{problem}

\section{Stability and Consensus Analysis}
\label{sec_dwell_time}
In this section, we provide conditions that generate a stable switched system and enable approximate consensus for the explorers. 


To facilitate the stability analysis, we define the following objects. Let $\lambda_{\max}(\mathcal{A})\in\mathbb{R}$ and $\lambda_{\min}(\mathcal{A})\in\mathbb{R}$ denote the maximum and minimum eigenvalue of the symmetric matrix $\mathcal{A}\in\mathbb{R}^{p\times p}$, respectively. Let $\kappa_{i}\triangleq S_{\max}\left(A\right)\overline{x}_{g}+\overline{d}_{i}\in\mathbb{R}_{>0}$, where $\overline{x}_{g}\in\mathbb{R}_{>0}$ is a bounding constant such that $\left\Vert x_{g}\right\Vert \leq\overline{x}_{g}$. Recall that $\overline{d}_{i}$ is an upper bound for the disturbance acting of explorer $i$. Let $f:\mathbb{R}\to\mathbb{R}$ be an essentially bounded measurable function. Then, $f\in\mathcal{L}_{\infty}$ if and only if $\inf\left\{ \bar{f}\in\mathbb{R}_{> 0}:\left|f\left(x\right)\right|\leq \bar{f}\text{ for almost every } x\in\mathbb{R}\right\} \in\mathbb{R}_{\geq0}$. Let $V_{T}\in\mathbb{R}_{>0}$ be a user-defined parameter that quantifies the maximum tolerable state estimation error, i.e., it is desirable to ensure $\left\Vert e_{1,i}\left(t\right)\right\Vert \leq V_{T}$ for all $t\geq0$ and $i\in F$. We now derive a maximum dwell-time condition that ensures $\left\Vert e_{1,i}\left(t\right)\right\Vert \leq V_{T}$ for all $t\in\left[t_{s}^{i},t_{s+1}^{i}\right]$, where continuous satisfaction of the maximum dwell-time condition by the relay agent ensures $\left\Vert e_{1,i}\left(t\right)\right\Vert \leq V_{T}$ for all $t\geq0$.
\textcolor{black}{
\begin{theorem}
	\label{Theorem 1} 
	If $\left\Vert e_{1,i}\left(t_{s}^{i}\right)\right\Vert =0$ and the relay agent satisfies the maximum dwell-time condition given by
    \begin{equation}
        t_{s+1}^{i}-t_{s}^{i} \leq\frac{1}{S_{\max}\left(A\right)}\ln\left(\frac{V_{T}S_{\max}\left(A\right)}{\kappa_{i}}+1\right),
    \label{Maximum Dwell-Time Condition}
    \end{equation}
    then $\left\Vert e_{1,i}\left(t\right)\right\Vert \leq V_{T}$ for all $t\in\left[t_{s}^{i},t_{s+1}^{i}\right]$. 
\end{theorem}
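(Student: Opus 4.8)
The plan is to solve the linear initial-value problem \eqref{e1i Dot closed-loop} explicitly via the variation-of-constants formula and then bound the resulting integral using the maximum dwell-time condition \eqref{Maximum Dwell-Time Condition}.

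First I would observe that on $\left[t_{s}^{i},t_{s+1}^{i}\right)$ the signal $e_{1,i}$ satisfies the linear ODE $\dot{e}_{1,i}(t)=Ae_{1,i}(t)+w_{i}(t)$ with forcing term $w_{i}(t)\triangleq -Ax_{g}-d_{i}(t)$ and initial condition $e_{1,i}(t_{s}^{i})=0_{m}$; since $d_{i}$ is continuous, this solution is well defined and continuously differentiable, and by continuity of $e_{1,i}$ (noted after \eqref{E: ei closed-loop dynamics}) the representation extends to the closed interval. The variation-of-constants formula then gives $e_{1,i}(t)=\int_{t_{s}^{i}}^{t}e^{A(t-\tau)}w_{i}(\tau)\,d\tau$. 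Taking norms, using submultiplicativity of the induced $2$-norm together with the estimate $\left\Vert e^{A\varrho}\right\Vert\le e^{S_{\max}(A)\varrho}$ for $\varrho\ge 0$ (which follows from the power-series definition of the matrix exponential and $\left\Vert A\right\Vert=S_{\max}(A)$), and bounding $\left\Vert w_{i}(\tau)\right\Vert\le S_{\max}(A)\overline{x}_{g}+\overline{d}_{i}=\kappa_{i}$ by the triangle inequality and the disturbance bound, the remaining elementary integral yields $\left\Vert e_{1,i}(t)\right\Vert\le\frac{\kappa_{i}}{S_{\max}(A)}\bigl(e^{S_{\max}(A)(t-t_{s}^{i})}-1\bigr)$.

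The right-hand side is monotonically increasing in $t$, so its supremum over $\left[t_{s}^{i},t_{s+1}^{i}\right]$ is attained at $t=t_{s+1}^{i}$. Substituting $t=t_{s+1}^{i}$ and then invoking \eqref{Maximum Dwell-Time Condition} rewritten as $e^{S_{\max}(A)(t_{s+1}^{i}-t_{s}^{i})}\le\frac{V_{T}S_{\max}(A)}{\kappa_{i}}+1$ collapses the bound to exactly $V_{T}$, which proves the claim.

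I do not expect a substantive obstacle here; the only points needing minor care are the matrix-exponential norm estimate $\left\Vert e^{A\varrho}\right\Vert\le e^{S_{\max}(A)\varrho}$ and the validity of the variation-of-constants representation up to and including the endpoint $t_{s+1}^{i}$. A comparison-lemma argument applied to $\frac{d}{dt}\left\Vert e_{1,i}(t)\right\Vert$ is an alternative, but it would naturally introduce $\lambda_{\max}\bigl((A+A^{\mathrm{T}})/2\bigr)$ in place of $S_{\max}(A)$, so the variation-of-constants route is the one that reproduces the stated constant.
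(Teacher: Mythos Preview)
Your argument is correct, but it differs from the paper's. The paper does not solve the linear ODE explicitly; instead it introduces the Lyapunov-like functional $V_{1,i}(e_{1,i})=\tfrac{1}{2}e_{1,i}^{\textrm{T}}e_{1,i}$, bounds $\dot V_{1,i}\le S_{\max}(A)\Vert e_{1,i}\Vert^{2}+\kappa_{i}\Vert e_{1,i}\Vert$, rewrites this as a scalar differential inequality in $V_{1,i}$, and then invokes the Comparison Lemma \cite[Lemma 3.4]{Khalil} to arrive at the very same estimate $\Vert e_{1,i}(t)\Vert\le\frac{\kappa_{i}}{S_{\max}(A)}\bigl(e^{S_{\max}(A)(t-t_{s}^{i})}-1\bigr)$. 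Your variation-of-constants route is shorter and more transparent: it avoids the detour through the scalar Bernoulli-type comparison problem and makes the dependence on $S_{\max}(A)$ explicit via the elementary bound $\Vert e^{A\varrho}\Vert\le e^{S_{\max}(A)\varrho}$. The paper's Lyapunov approach, on the other hand, fits the template used for Theorems~\ref{Theorem 2} and~\ref{Theorem 3}, so it is stylistically uniform across the analysis section. One small correction to your closing remark: the paper's comparison-lemma argument does in fact recover $S_{\max}(A)$ rather than $\lambda_{\max}\bigl((A+A^{\textrm{T}})/2\bigr)$, because it bounds the quadratic form $e_{1,i}^{\textrm{T}}Ae_{1,i}$ by Cauchy--Schwarz ($\le\Vert e_{1,i}\Vert\Vert Ae_{1,i}\Vert\le S_{\max}(A)\Vert e_{1,i}\Vert^{2}$) rather than by symmetrization; the two approaches are therefore equivalent in the constant they produce.
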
}

\begin{proof}
	Let $t\geq t_{s}^{i}$, and suppose $\left\Vert e_{1,i}\left(t_{s}^{i}\right)\right\Vert =0$.\footnote{$\left\Vert e_{1,i}\left(t_{s}^{i}\right)\right\Vert =0$ because the relay agent serviced explorer $i$ at time $t_s^i$.}  Consider the common Lyapunov-like functional candidate $V_{1,i}:\mathbb{R}^{m}\rightarrow\mathbb{R}_{\geq0}$ defined as $V_{1,i}\left(e_{1,i}\left(t\right)\right)\triangleq\frac{1}{2}e_{1,i}^{\textrm{T}}\left(t\right)e_{1,i}\left(t\right)$. Given \eqref{explorer Dynamics} and \eqref{Reset}, \eqref{e1i} is continuously differentiable over $\left[t_{s}^{i},t_{s+1}^{i}\right).$
	Substituting \eqref{e1i Dot closed-loop} into the time derivative of $V_{1,i}$ yields \textcolor{black}{ $\dot{V}_{1,i}\left(e_{1,i}\left(t\right)\right)=e_{1,i}^{\textrm{T}}\left(t\right)\left(Ae_{1,i}\left(t\right)-Ax_{g}-d_{i}\left(t\right)\right),$
	which can be upper bounded by \vspace{-0.05in}
	\begin{equation}
	\dot{V}_{1,i}\left(e_{1,i}\left(t\right)\right)\leq S_{\max}\left(A\right)\left\Vert e_{1,i}\left(t\right)\right\Vert ^{2}+\kappa_{i}\left\Vert e_{1,i}\left(t\right)\right\Vert.
	\label{V1 Dot Bound}
	\end{equation}}
	\noindent Using the definition of $V_{1,i}$, \eqref{V1 Dot Bound} can be upper bounded by $\dot{V}_{1,i}\left(e_{1,i}\left(t\right)\right)\leq 2S_{\max}\left(A\right)V_{1,i}\left(e_{1,i}\left(t\right)\right)+\kappa_{i}\sqrt{2V_{1,i}\left(e_{1,i}\left(t\right)\right)}$. Using the Comparison Lemma \cite[Lemma 3.4]{Khalil} over $\left[t_{s}^{i},t_{s+1}^{i}\right)$, \vspace{-0.05in}
	\begin{equation}
	V_{1,i}\left(e_{1,i}\left(t\right)\right)\leq\left(\frac{\kappa_{i}}{S_{\max}\left(A\right)}\frac{\sqrt{2}}{2}\left(e^{S_{\max}\left(A\right)\left(t-t_{s}^{i}\right)}-1\right)\right)^{2}.
	\label{V1 Bound}
	\end{equation}
	Substituting the definition of $V_{1,i}$ into \eqref{V1 Bound} yields $\left\Vert e_{1,i}\left(t\right)\right\Vert \leq\frac{\kappa_{i}}{S_{\max}\left(A\right)}\left(e^{S_{\max}\left(A\right)\left(t-t_{s}^{i}\right)}-1\right)$.
	Define $\Phi_{i}:\left[t_{s}^{i},t_{s+1}^{i}\right)\rightarrow\mathbb{R}_{\geq0}$ as
	\begin{equation}
		\Phi_{i}\left(t\right)\triangleq\frac{\kappa_{i}}{S_{\max}\left(A\right)}\left(e^{S_{\max}\left(A\right)\left(t-t_{s}^{i}\right)}-1\right).
	\label{Phi_i}
	\end{equation}
	Since $\left\Vert e_{1,i}\left(t\right)\right\Vert \leq\frac{\kappa_{i}}{S_{\max}\left(A\right)}\left(e^{S_{\max}\left(A\right)\left(t-t_{s}^{i}\right)}-1\right)$
	for all $t\in\left[t_{s}^{i},t_{s+1}^{i}\right)$ and $\left\Vert e_{1,i}\left(t_{s+1}^{i}\right)\right\Vert =0$,
	where $t_{s+1}^i>t_s^i$ and $\Phi_{i}\left(t_{s+1}^{i}\right)>0,$ we see that $\left\Vert e_{1,i}\left(t\right)\right\Vert \leq\Phi_{i}\left(t\right)$
	for all $t\in\left[t_{s}^{i},t_{s+1}^{i}\right].$ If $\Phi_{i}\left(t_{s+1}^{i}\right)\leq V_{T},$
	then $\left\Vert e_{1,i}\left(t\right)\right\Vert \leq V_{T}$ for
	all $t\in\left[t_{s}^{i},t_{s+1}^{i}\right].$ Moreover, $\Phi_{i}\left(t_{s+1}^{i}\right)\leq V_{T}$ yields the dwell-time condition in \eqref{Maximum Dwell-Time Condition}. Hence, $\left\Vert e_{1,i}\left(t\right)\right\Vert \leq V_{T}$ for all $t\in\left[t_{s}^{i},t_{s+1}^{i}\right]$ provided $\left\Vert e_{1,i}\left(t_{s}^{i}\right)\right\Vert =0$ and \eqref{Maximum Dwell-Time Condition} hold. 
\end{proof}
\textcolor{black}{
Next, we show that the observer in \eqref{Reset} ensures the estimated tracking error in \eqref{e2i} is exponentially regulated for all $t\in\left[t_{s}^{i},t_{s+1}^{i}\right)$ and each servicing instance $s\in\mathbb{Z}_{\geq0}$. }

\begin{theorem}
\textcolor{black}{
If the ARE in \eqref{ARE} is satisfied, then the observer in \eqref{Reset} and controller in \eqref{explorer Controller} ensure the estimated tracking error in \eqref{e2i} is exponentially regulated in the sense that \vspace{-0.03in}
\begin{equation}
    \left\Vert e_{2,i}\left(t\right)\right\Vert \leq\sqrt{\frac{\lambda_{\max}\left(P\right)}{\lambda_{\min}\left(P\right)}}\left\Vert e_{2,i}\left(t_{s}^{i}\right)\right\Vert e^{-\frac{k}{2\lambda_{\max}\left(P\right)}\left(t-t_{s}^{i}\right)}
\label{Norm of e2i Bound}
\end{equation}
for all $t\in\left[t_{s}^{i},t_{s+1}^{i}\right)$ and each servicing instance $s\in\mathbb{Z}_{\geq0}$. 
\label{Theorem 2}}
\end{theorem}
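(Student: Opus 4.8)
The plan is to run a standard quadratic Lyapunov argument on each servicing interval, using the ARE solution $P$ as the weighting matrix so that the cross terms in the Lyapunov derivative cancel by construction. Concretely, I would introduce the Lyapunov-like functional $V_{2,i}:\mathbb{R}^{m}\to\mathbb{R}_{\ge0}$ defined by $V_{2,i}\left(e_{2,i}\left(t\right)\right)\triangleq e_{2,i}^{\mathrm{T}}\left(t\right)Pe_{2,i}\left(t\right)$, which is positive definite because $P$ is the positive definite solution of \eqref{ARE}, and which is continuously differentiable on $\left[t_{s}^{i},t_{s+1}^{i}\right)$ since \eqref{e2i Dot closed-loop} is linear and smooth there.

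Next I would differentiate $V_{2,i}$ along \eqref{e2i Dot closed-loop}, obtaining $\dot{V}_{2,i}=e_{2,i}^{\mathrm{T}}\left[\left(A-BB^{\mathrm{T}}P\right)^{\mathrm{T}}P+P\left(A-BB^{\mathrm{T}}P\right)\right]e_{2,i}=e_{2,i}^{\mathrm{T}}\left[A^{\mathrm{T}}P+PA-2PBB^{\mathrm{T}}P\right]e_{2,i}$, and then substitute \eqref{ARE}, which forces the bracketed matrix to equal $-kI_{m}$, hence $\dot{V}_{2,i}\left(e_{2,i}\left(t\right)\right)=-k\left\Vert e_{2,i}\left(t\right)\right\Vert ^{2}$. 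Combining this with the Rayleigh-quotient sandwich $\lambda_{\min}\left(P\right)\left\Vert e_{2,i}\right\Vert ^{2}\le V_{2,i}\le\lambda_{\max}\left(P\right)\left\Vert e_{2,i}\right\Vert ^{2}$ turns the identity into the differential inequality $\dot{V}_{2,i}\le-\frac{k}{\lambda_{\max}\left(P\right)}V_{2,i}$. Applying the Comparison Lemma \cite[Lemma 3.4]{Khalil} on $\left[t_{s}^{i},t_{s+1}^{i}\right)$ gives $V_{2,i}\left(e_{2,i}\left(t\right)\right)\le V_{2,i}\left(e_{2,i}\left(t_{s}^{i}\right)\right)e^{-\frac{k}{\lambda_{\max}\left(P\right)}\left(t-t_{s}^{i}\right)}$; sandwiching once more with the Rayleigh bounds and taking square roots recovers \eqref{Norm of e2i Bound}. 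Since the reset in \eqref{Reset} only fixes the value of $\left\Vert e_{2,i}\left(t_{s}^{i}\right)\right\Vert$, the estimate holds verbatim for every servicing instance $s\in\mathbb{Z}_{\ge0}$.

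There is no genuine obstacle in this proof — it is a textbook exponential-stability estimate — so the only points requiring care are (i) noting that the coefficient $-2PBB^{\mathrm{T}}P$ in the ARE \eqref{ARE} is chosen precisely so that $\dot{V}_{2,i}$ collapses to $-k\left\Vert e_{2,i}\right\Vert ^{2}$ with no residual indefinite term, and (ii) invoking positive definiteness of $P$ (so that $\lambda_{\min}\left(P\right)>0$) to legitimize both the sandwich inequalities and the final square root. One could equivalently replace the Comparison Lemma step by a direct Gr\"onwall argument, but the Comparison Lemma keeps the exposition consistent with the proof of Theorem~\ref{Theorem 1}.
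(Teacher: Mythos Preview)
Your proposal is correct and follows essentially the same route as the paper: the same quadratic Lyapunov function $V_{2,i}=e_{2,i}^{\mathrm{T}}Pe_{2,i}$, the same use of the ARE to collapse $\dot{V}_{2,i}$ to $-k\Vert e_{2,i}\Vert^{2}$, the same Rayleigh-quotient sandwich and Comparison Lemma over $[t_{s}^{i},t_{s+1}^{i})$. The only cosmetic difference is the order in which the Rayleigh bounds are invoked.
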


\begin{proof}
\textcolor{black}{
Consider the common Lyapunov functional $V_{2,i}:\mathbb{R}^{m}\rightarrow\mathbb{R}_{\geq0}$ defined as $ V_{2,i}\left(e_{2,i}\left(t\right)\right)	\triangleq e_{2,i}^{\textrm{T}}\left(t\right)Pe_{2,i}\left(t\right)$.
By the Rayleigh quotient, it follows that \vspace{-0.05in}
\begin{equation}
    \lambda_{\min}\left(P\right)\left\Vert e_{2,i}\left(t\right)\right\Vert ^{2}\leq V_{2,i}\left(e_{2,i}\left(t\right)\right)\leq\lambda_{\max}\left(P\right)\left\Vert e_{2,i}\left(t\right)\right\Vert ^{2}.
\label{V2i Rayleigh Bounds}
\end{equation}
By \eqref{Reset}, \eqref{e2i} is continuously differentiable over $\left[t_{s}^{i},t_{s+1}^{i}\right)$. Substituting \eqref{e2i Dot closed-loop} into the time derivative of $V_{2,i}$ yields \vspace{-0.05in}
\begin{equation}
    \dot{V}_{2,i}\left(e_{2,i}\left(t\right)\right)=e_{2,i}^{\textrm{T}}\left(t\right)\left(A^{\textrm{T}}P+PA-2PBB^{\textrm{T}}P\right)e_{2,i}\left(t\right).
\label{V2i Dot}
\end{equation}
Using the ARE in \eqref{ARE}, it follows that \eqref{V2i Dot} is equivalent to $\dot{V}_{2,i}\left(e_{2,i}\left(t\right)\right)=-k\left\Vert e_{2,i}\left(t\right)\right\Vert ^{2}$.
Using \eqref{V2i Rayleigh Bounds}, we then see that \vspace{-0.05in}
\begin{equation}
    \dot{V}_{2,i}\left(e_{2,i}\left(t\right)\right)\leq-\frac{k}{\lambda_{\max}\left(P\right)}V_{2,i}\left(e_{2,i}\left(t\right)\right).
\label{V2i Dot Bound 2}
\end{equation}
Invoking the Comparison Lemma in \cite[Lemma 3.4]{Khalil} on \eqref{V2i Dot Bound 2} over $\left[t_{s}^{i},t_{s+1}^{i}\right)$ yields \vspace{-0.05in}
\begin{equation}
    V_{2,i}\left(e_{2,i}\left(t\right)\right)\leq V_{2,i}\left(e_{2,i}\left(t_{s}^{i}\right)\right)e^{-\frac{k}{\lambda_{\max}\left(P\right)}\left(t-t_{s}^{i}\right)},
\label{V2i Bound}
\end{equation}
where substituting the definition of $V_{2,i}$ and \eqref{V2i Rayleigh Bounds} into \eqref{V2i Bound} yields \eqref{Norm of e2i Bound}.
}
\end{proof}
We now show the tracking error in \eqref{E: ei} is uniformly ultimately bounded (UUB).
\vspace{+0.2in}
\begin{theorem}
If the relay agent satisfies the maximum dwell-time condition in \eqref{Maximum Dwell-Time Condition} for each $s\in\mathbb{Z}_{\geq0}$ and $e_{1,i}\left(t_{0}^{i}\right)=0_{m}$, then the observer in \eqref{Reset} and controller in \eqref{explorer Controller} ensure the tracking error in \eqref{E: ei} is uniformly ultimately bounded in the sense that \vspace{-0.05in}
\begin{equation}
\begin{aligned}
\left\Vert e_{i}\left(t\right)\right\Vert & \leq\frac{\lambda_{\max}\left(P\right)\rho}{\lambda_{\min}\left(P\right)k}\left(1-e^{-\frac{k}{2\lambda_{\max}\left(P\right)}t}\right)\\
&+\sqrt{\frac{\lambda_{\max}\left(P\right)}{\lambda_{\min}\left(P\right)}}\left\Vert e_{i}\left(0\right)\right\Vert e^{-\frac{k}{2\lambda_{\max}\left(P\right)}t},
\end{aligned}
\label{ei UUB Bound}
\end{equation}
where $k>0$ is a user-defined gain used in \eqref{ARE} and $\rho\triangleq2\overline{d}_{i}S_{\max}\left(P\right)+2V_{T}S_{\max}\left(PBB^{\textrm{T}}P\right)+2S_{\max}\left(PA\right)\overline{x}_{g}\in\mathbb{R}_{>0}$. 
\label{Theorem 3} 
\end{theorem}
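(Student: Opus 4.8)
\textbf{Proof proposal for Theorem~\ref{Theorem 3}.} The plan is to build a single quadratic Lyapunov function for the tracking-error dynamics \eqref{E: ei closed-loop dynamics} and extract the UUB estimate via the Comparison Lemma, exactly as in the proofs of Theorems~\ref{Theorem 1} and \ref{Theorem 2}. Take $V_i\bigl(e_i(t)\bigr)\triangleq e_i^{\textrm{T}}(t)Pe_i(t)$ with $P$ the ARE solution from \eqref{ARE}; by the Rayleigh quotient $\lambda_{\min}(P)\|e_i(t)\|^2\le V_i\bigl(e_i(t)\bigr)\le\lambda_{\max}(P)\|e_i(t)\|^2$. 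Since $e_i(t)$ was shown to be continuous on $[0,\infty)$ and continuously differentiable on the interior of each servicing interval $\bigl[t_s^i,t_{s+1}^i\bigr)$, so is $V_i$. On such an interval, differentiating $V_i$ along \eqref{E: ei closed-loop dynamics} gives
\[
\dot V_i = e_i^{\textrm{T}}\bigl[(A-BB^{\textrm{T}}P)^{\textrm{T}}P+P(A-BB^{\textrm{T}}P)\bigr]e_i + 2e_i^{\textrm{T}}P\bigl(BB^{\textrm{T}}Pe_{1,i}-Ax_g-d_i\bigr).
\]
The bracketed matrix is $A^{\textrm{T}}P+PA-2PBB^{\textrm{T}}P$, which by \eqref{ARE} equals $-kI_m$, so the quadratic part contributes $-k\|e_i\|^2$.

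Next I would bound the three cross terms by Cauchy--Schwarz and sub-multiplicativity of the induced norm, using $\|e_{1,i}(t)\|\le V_T$ for all $t\ge 0$ (which holds by Theorem~\ref{Theorem 1} applied on every interval, since $e_{1,i}(t_s^i)=0_m$ by \eqref{e1i Dot closed-loop} for every $s$ and the dwell-time condition \eqref{Maximum Dwell-Time Condition} is assumed at every $s$), together with $\|d_i(t)\|\le\overline{d}_i$ and $\|x_g\|\le\overline{x}_g$. This yields
\[
2e_i^{\textrm{T}}P\bigl(BB^{\textrm{T}}Pe_{1,i}-Ax_g-d_i\bigr)\le\bigl(2V_T S_{\max}(PBB^{\textrm{T}}P)+2S_{\max}(PA)\overline{x}_g+2\overline{d}_i S_{\max}(P)\bigr)\|e_i\|=\rho\|e_i\|,
\]
so $\dot V_i\le-k\|e_i\|^2+\rho\|e_i\|$ almost everywhere. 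Applying the Rayleigh bounds once more — $\|e_i\|^2\ge V_i/\lambda_{\max}(P)$ in the negative term and $\|e_i\|\le\sqrt{V_i/\lambda_{\min}(P)}$ in the positive term — gives $\dot V_i\le-\tfrac{k}{\lambda_{\max}(P)}V_i+\tfrac{\rho}{\sqrt{\lambda_{\min}(P)}}\sqrt{V_i}$.

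Finally I would pass to $W\triangleq\sqrt{V_i}$, so that $\dot W=\dot V_i/(2\sqrt{V_i})\le-\tfrac{k}{2\lambda_{\max}(P)}W+\tfrac{\rho}{2\sqrt{\lambda_{\min}(P)}}$ wherever $W>0$. The Comparison Lemma \cite[Lemma 3.4]{Khalil} then gives $W(t)\le W(0)e^{-\frac{k}{2\lambda_{\max}(P)}t}+\tfrac{\rho\lambda_{\max}(P)}{k\sqrt{\lambda_{\min}(P)}}\bigl(1-e^{-\frac{k}{2\lambda_{\max}(P)}t}\bigr)$. Substituting $\|e_i(t)\|\le W(t)/\sqrt{\lambda_{\min}(P)}$ and $W(0)=\sqrt{V_i(0)}\le\sqrt{\lambda_{\max}(P)}\,\|e_i(0)\|$ reproduces exactly \eqref{ei UUB Bound}, and taking $\limsup_{t\to\infty}$ shows $\|e_i(t)\|$ is ultimately bounded by the ball of radius $\lambda_{\max}(P)\rho/(\lambda_{\min}(P)k)$, hence UUB.

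\textbf{Main obstacle.} The only delicate point is that $e_{1,i}$ jumps at each servicing instant $t_s^i$, so $\dot e_i$, $\dot V_i$, and $\dot W$ are merely piece-wise continuous and the comparison argument must be carried across countably many switching times. This is handled by noting that $e_i(t)$ — and therefore $V_i$ and $W$ — are continuous on all of $[0,\infty)$ (the resets in \eqref{Reset} affect $\hat x_i$, hence $e_{1,i}$, but not $x_i$ or $e_i$), that the differential inequality holds at all but countably many instants, and that the right-hand side of the comparison ODE is continuous; under these conditions the Comparison Lemma applies on each interval and the resulting bounds chain together seamlessly. One also smooths over the non-differentiability of $W=\sqrt{V_i}$ at $V_i=0$ by the standard device (e.g., bounding the upper Dini derivative of $\|e_i\|$, which satisfies the same inequality a.e.), which is routine.
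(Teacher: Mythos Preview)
Your proposal is correct and follows essentially the same approach as the paper: the same quadratic Lyapunov function $V_i=e_i^{\textrm{T}}Pe_i$, the ARE to collapse the quadratic term to $-k\|e_i\|^2$, the same cross-term bounds yielding $\dot V_i\le -k\|e_i\|^2+\rho\|e_i\|$, and then a comparison argument to obtain \eqref{ei UUB Bound}. You spell out the final integration step (via $W=\sqrt{V_i}$) and the handling of the countable discontinuities more explicitly than the paper does, but the route is the same.
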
	
\vspace{-0.2in}
\begin{proof} 
Suppose the relay agent satisfies the maximum dwell-time condition in \eqref{Maximum Dwell-Time Condition} for each $s\in\mathbb{Z}_{\geq0}$ and $e_{1,i}\left(t_{0}^{i}\right)=0_{m}$. Consider the common Lyapunov functional $V_{i}:\mathbb{R}^{m}\rightarrow\mathbb{R}_{\geq0}$ defined by $ V_{i}\left(e_{i}\left(t\right)\right)	\triangleq e_{i}^{\textrm{T}}\left(t\right)Pe_{i}\left(t\right)$. Recall that $e_i(t)$ is continuous, and $\dot{e}_i(t)$ is piece-wise continuous, where the discontinuities occur at each servicing instance. Hence, the set of discontinuities is countable. By the Rayleigh quotient, it follows that  \vspace{-0.05in}
\begin{equation}
    \lambda_{\min}\left(P\right)\left\Vert e_{i}\left(t\right)\right\Vert ^{2}	\leq V_{i}\left(e_{i}\left(t\right)\right)\leq\lambda_{\max}\left(P\right)\left\Vert e_{i}\left(t\right)\right\Vert ^{2}.
\label{Vi Rayleigh Bounds}
\end{equation}
Substituting \eqref{E: ei closed-loop dynamics} into the time derivative of $V_{i}$ yields \vspace{-0.05in}
\begin{equation}
\begin{aligned}
    \dot{V}_{i}\left(e_{i}\left(t\right)\right)&=e_{i}^{\textrm{T}}\left(t\right)\left(A^{\textrm{T}}P+PA-2PBB^{\textrm{T}}P\right)e_{i}\left(t\right)\\
    &+2e_{i}^{\textrm{T}}\left(t\right)P\left(BB^{\textrm{T}}Pe_{1,i}\left(t\right)-Ax_{g}-d_{i}\left(t\right)\right).
\end{aligned}
\label{Vi Dot 1}
\end{equation}
Using the ARE in \eqref{ARE}, it follows that \eqref{Vi Dot 1} can be upper bounded as
\begin{equation}
\begin{aligned}
    \dot{V}_{i}\left(e_{i}\left(t\right)\right)&\leq-k\left\Vert e_{i}\left(t\right)\right\Vert ^{2}+2S_{\max}\left(P\right)\left\Vert e_{i}\left(t\right)\right\Vert \left\Vert d_{i}\left(t\right)\right\Vert \\
    &+2S_{\max}\left(PBB^{\textrm{T}}P\right)\left\Vert e_{i}\left(t\right)\right\Vert \left\Vert e_{1,i}\left(t\right)\right\Vert \\
    &+2S_{\max}\left(PA\right)\left\Vert e_{i}\left(t\right)\right\Vert \left\Vert x_{g}\right\Vert.
\end{aligned}
\label{Vi Dot Bound 1}
\end{equation}
Since the relay agent satisfies the maximum dwell-time condition in \eqref{Maximum Dwell-Time Condition} for each $s\in\mathbb{Z}_{\geq0}$, $\left\Vert e_{1,i}\left(t\right)\right\Vert \leq V_{T}$ for all $t\geq0$ by Theorem \ref{Theorem 1}. Recall that $\left\Vert d_{i}\left(t\right)\right\Vert \leq\overline{d}_{i}$ and that $\left\Vert x_{g}\right\Vert \leq\overline{x}_{g}$. Hence, \eqref{Vi Dot Bound 1} can be upper bounded as 
\begin{equation}
    \dot{V}_{i}\left(e_{i}\left(t\right)\right)\leq-k\left\Vert e_{i}\left(t\right)\right\Vert ^{2}+\rho\left\Vert e_{i}\left(t\right)\right\Vert,
\label{Vi Dot Bound 2}
\end{equation} 
where the auxiliary constant $\rho$ is defined in Theorem \ref{Theorem 3}. Substituting \eqref{Vi Rayleigh Bounds} into \eqref{Vi Dot Bound 2} and integrating both sides of the resulting inequality over $[0,\infty)$ yields \eqref{ei UUB Bound}. Observe that \eqref{ei UUB Bound} implies $e_{i}\left(t\right)\in\mathcal{L}_{\infty}$. Since $e_{i}\left(t\right)\in\mathcal{L}_{\infty}$ and $e_{1,i}\left(t\right)\in\mathcal{L}_{\infty}$ given the relay agent satisfies the maximum dwell-time condition in \eqref{Maximum Dwell-Time Condition} for each $s\in\mathbb{Z}_{\geq0}$, \eqref{E: ei Alternative Form} implies $e_{2,i}\left(t\right)\in\mathcal{L}_{\infty}$. Hence, $u_{i}\left(t\right)\in\mathcal{L}_{\infty}$ given \eqref{explorer Controller} and $e_{2,i}\left(t\right)\in\mathcal{L}_{\infty}$.
\end{proof} 
\begin{remark}
From \eqref{ei UUB Bound}, we see that
\begin{equation*}
    \underset{t\to\infty}{\text{lim sup}} \ \Vert e_i(t) \Vert \leq \frac{\lambda_{\max}\left(P\right)\rho}{\lambda_{\min}\left(P\right)k}\triangleq \Lambda(\rho),
\end{equation*}
where $\Lambda(\rho)$ can be made small by making $\rho$ small, i.e., selecting a small $V_{T}\in\mathbb{R}_{>0}$ and setting the desired state as the origin. A change of coordinate transformation can be used to make the desired state the origin.
\label{Remark 1}
\end{remark}
\begin{remark}
Note that $\left\Vert Cx_g - y_{i}\left(t\right)\right\Vert \leq S_{\max}\left(C\right)\left\Vert e_{i}\left(t\right)\right\Vert$. If the radius of the goal region is selected such that $\Lambda(\rho)S_{\max}(C)< R_{f}$, then $e_{1,i}\left(t\right)=0_{m}$ provided $\left\Vert e_{i}\left(t\right)\right\Vert \leq \Lambda(\rho)$. Since $e_{1,i}\left(t\right)=0_{m}$ within the goal region, $\Lambda(\rho)$ can be reduced to $\Lambda(\rho^{\ast})$, where $\rho^{\ast}\triangleq2\overline{d}_{i}S_{\max}\left(P\right)+2S_{\max}\left(PA\right)\bar{x}_{g}$. 
\label{Remark 2}
\end{remark}

\section{Controller Synthesis with Intermittent Communication and MTL Specifications}
\label{sec_reactive}
In this section, we provide the framework and algorithms for controller synthesis of the relay agent to satisfy the maximum dwell-time conditions and the practical constraints. The controller synthesis for the relay agent is conducted iteratively as the state estimates for the explorers are reset to the true state values whenever they are serviced by the relay agent, and thus the control inputs need to be recomputed with the reset values.


We define the discrete time set $\mathbb{T}_d\triangleq\{t[0], t[1], \dots\}$, where $t[j]=jT_s$ for $j\in\mathbb{I}$, and $T_s\in\mathbb{R}_{>0}$ is the sampling period. The maximum dwell-time $\frac{1}{S_{\textrm{max}}\left(A\right)}\ln\left(\frac{S_{\textrm{max}}\left(A\right)V_T}{\kappa_{i}}+1\right)$ in (\ref{Maximum Dwell-Time Condition}) for explorer $i$ $(i\in F)$ is in the interval $[n_iT_s, (n_i+1)T_s)$ for some non-negative integer $n_i$. 
We use the following MTL specifications for encoding the maximum dwell-time condition ($\eta\in[0, R)$ is a user-defined parameter):
\begin{align}
\begin{split}
&\phi_{\textrm{m}}=\bigwedge_{1\le i\le Q}\big(\Box\Diamond_{[0, n_i]}\norm{y_0-\hat{y}_i}\le\eta\big),
\end{split}
\end{align} 
where $\phi_{\textrm{m}}$ means ``for any explorer $i$, the relay agent needs to be within $\eta$ distance from the estimated position of explorer $i$ at least once in any $n_iT_s$ time periods''. 

The relay agent also needs to satisfy an MTL specification $\phi_{\textrm{p}}$ for the practical constraints. One example of $\phi_{\textrm{p}}$ is as follows.
\begin{align}
&\phi_{\textrm{p}}=\Box\Diamond_{[0, c]}\big((y_0\in G_1)\vee (y_0\in G_2)\big)\wedge\Box (y_0\in D),
\end{align} 
which means ``the relay agent needs to reach the charging station $G_1$ or $G_2$ at least once in any $cT_s$ time periods, and it should always remain in the region $D$'' ($c$ is a positive integer).

Combining $\phi_{\textrm{m}}$ and $\phi_{\textrm{p}}$, the MTL specification for the relay agent is                                  
$\phi=\phi_{\textrm{m}}\wedge\phi_{\textrm{p}}$. We use $[\phi]^{\ell}_{j}$ to denote the formula modified from the MTL formula $\phi$ when $\phi$ is evaluated at time index $j$ and the current time index is $\ell$.  $[\phi]^{\ell}_{j}$ can be calculated recursively as follows (we use $\pi_{j}$ to denote the atomic predicate $\pi$ evaluated at time index $j$):
\begin{align}
\begin{split}                     
[\pi]^{\ell}_{j} :=&  
\begin{cases}
\pi_{j},& \mbox{if $j>\ell$}\\  	
\top,& \mbox{if $j\le \ell$ and $y^{j}\in\mathcal{O}(\pi)$}\\  
\bot,& \mbox{if $j\le \ell$ and $y^{j}\not\in\mathcal{O}(\pi)$}                                                                                                                                         
\end{cases}\\
[\neg\phi]^{\ell}_{j}  :=&\neg[\phi]^{\ell}_{j}\\
[\phi_1\wedge\phi_2]^{\ell}_{j}:=&[\phi_1]^{\ell}_{j}\wedge[\phi_2]^{\ell}_{j}\\
[\phi_1\mathcal{U}_{\mathcal{I}}\phi_2]^{\ell}_{j} :=&\bigvee_{j'\in (j+\mathcal{I})}\Big([\phi_2]^{\ell}_{j'}\wedge\bigwedge_{j\le j''<j'}[\phi_1]^{\ell}_{j''}\Big),
\end{split}
\label{update_phi}
\end{align}
where $\bot$ stands for the Boolean constant False.
If the MTL formula $\phi$ is evaluated at the initial time index (which is the usual case when the task starts at the initial time), then the modified formula is $[\phi]^{\ell}_{0}$. 

Algorithm 1 shows the controller synthesis approach with intermittent communication and MTL specifications. 
The controller synthesis problem can be formulated as a sequence of mixed integer linear programming (MILP) problems, denoted as MILP-sol in Line \ref{MILP-sol}, and expressed as follows:
\begin{align}
\underset{\mathbf{u}^{\ell:\ell+N-1}_0}{\argmin} ~ & \sum_{j=\ell}^{\ell+N-1}\norm{u_0^j}   
	\end{align}
	\begin{align}
\text{subject to:} ~ 
& x_0^{j+1}=\bar{A}_0x_0^{j}+\bar{B}^0u_0^j, ~y_0^{j}=\bar{C}_0x_0^{j}, \nonumber \\
	& ~~~\forall i\in F, \forall j=\ell,\dots,\ell+N-1,\\
& \hat{x}_i^{j+1}=\bar{A}(\hat{x}_i^{j}-x_g)+\bar{B}u_i^j, ~\hat{y}_i^{j}=\bar{C}\hat{x}_i^{j}, \nonumber \\ & ~~~ \forall i\in F, \forall j=\ell,\dots,\ell+N-1, \label{update_constraint}\\
& u_{0,\textrm{min}}\le u_0^j\le u_{0,\textrm{max}}, \forall i\in F, \nonumber \\ & ~~~~~~~~~~~~~~\forall j=\ell,\dots,\ell+N,\\
& (\tilde{y}^{\ell:\ell+N-1}, 0)\models_{\textrm{W}} [\phi]^{\ell}_{0}, 
\label{MILP}               
\end{align} 
where the time index $\ell$ is initially set as 0, $N\in\mathbb{Z}_{>0}$ is the number of time instants in the control horizon, $\tilde{y}^{\ell:\ell+N-1}=[y_0^{\ell:\ell+N-1}, \hat{y}_1^{\ell:\ell+N-1}, \dots, \hat{y}_Q^{\ell:\ell+N-1}]$, \(\mathbf{u}^{\ell:\ell+N-1}_0 = [u^{\ell}_0, u^{\ell+1}_0, \cdots, u^{\ell+N-1}_0]\) is the control inputs of the relay agent, the input values are constrained to $[u_{0, \textrm{min}}, u_{0, \textrm{max}}]$, $\bar{A}^0$, $\bar{B}^0$, $\bar{C}^0$, $\bar{A}$, $\bar{B}$ and $\bar{C}$ are converted from $A^0$, $B^0$, $C^0$, $A$, $B$ and $C$ respectively for the discrete-time state-space representation, and $u_i^j$ are control inputs of explorer $i$ from (\ref{explorer Controller}). Note that we only require the trajectory $y_0^{\ell:\ell+N-1}$ to weakly satisfy $\phi$ as $\ell+N-1$ may be less than the necessary length $L(\phi)$.

At each time index $\ell$, we check if there exists any explorer that is being serviced (Line \ref{check_serve}). If there are such explorers, we update the state estimates of those explorers with their true state values (Line \ref{update}). Then, we modify the MTL formula as in (\ref{update_phi}). The MILP is solved for time index $\ell$ with the updated state values and the modified MTL formula $[\phi]^{\ell}_{0}$ (Line \ref{recompute}). The previously computed relay agent control inputs are replaced by the newly computed control inputs from time index $\ell$ to $\ell+N-1$ (Line \ref{replace}).

\begin{algorithm}[h!]
	\caption{Controller synthesis of MASs with intermittent communication and MTL specifications.}                                                                   
	\label{MTLalg}
	\begin{algorithmic}[1]
		\State \textbf{Inputs:}  $x_0^{0}$, $x_i^{0}$, $\phi$, $x_g$, $R_f$, $V_{T}$, $\eta$
		\State $\ell\gets0$, $\ell^{\ast}\gets0$
		\State Solve MILP-sol to obtain the optimal inputs $u_{0}^{\ast q}~(q=0,1,\dots,N-1)$ \label{MILP-sol}
		\While{$\norm{Cx_g-y_i(t[\ell])}>R_f$ for some $i\in F$}
		\State $\mathcal{W}=\{i~\vert~\norm{y_0-\hat{y}_i(t[\ell])}\le\eta\}$ \label{check_serve}
		\If{$\mathcal{W}\neq\emptyset$ or $\ell\ge\ell^{\ast}+N$}
		\State 	$\forall i\in\mathcal{W}$, update $\hat{x}_i^{\ell}$ in constraint (\ref{update_constraint}) and change constraint (\ref{update_constraint}) as follows: \label{update}
		\[
		\begin{split}
		& \hat{x}_i^{j+1}=\bar{A}(\hat{x}_i^{j}-x_g)+\bar{B}u_i^j, \forall i\in F, \nonumber \\ & ~~~~~~~~~~~~~~~~~~~~~~~~\forall j=\ell,\ell+1,\dots,\ell+N-1,\\
		& \hat{x}_i^{\ell}=x_i^{\ell}, \forall i\in\mathcal{W}
		\end{split}
		\]	
		\State Re-solve MILP-sol to obtain the optimal inputs $u^{\ast\ell+q}~(q=0,1,\dots, N-1)$	\label{recompute}
		\State $u_{0}^{\ast\ell+q}\gets u^{\ast\ell+q}~(q=0,1,\dots, N-1)$, $\ell^{\ast}\gets\ell$				\label{replace}
		\EndIf	
		\State $\ell\gets\ell+1$
		\EndWhile    		                
		\State Return $\mathbf{u}^{\ast}_0=[u^{\ast0}_0, u^{\ast1}_0, \dots]$
			
	\end{algorithmic}
\end{algorithm}

We use $\hat{t}_{s+1}^{i}$ to denote the $(s+1)^\textrm{th}$ time that $\norm{y_0(t)-y_i(t)}\le\eta$ holds in the discrete time set $\mathbb{T}_d$ for explorer $i$\footnote{For $s=0,$ $\hat{t}_{0}^{i}$ is the initial time, i.e., $\hat{t}_{0}^{i}=0.$}, i.e.,
\begin{align}\nonumber
\hat{t}_{s+1}^{i}\triangleq &\inf\left\{ t> \hat{t}_{s}^{i}: (t\in\mathbb{T}_d)\wedge\big(\left\Vert \hat{y}_{i}\left(t\right)-y_{0}\left(t\right)\right\Vert \leq \eta\big)\right\}.
\end{align}
We design the communication switching signal $\zeta_i$ as follows:
\begin{align}
\zeta_i(t)=&
\begin{cases}
1, ~~~~~~\mbox{if}~t=\hat{t}_{s}^{i}~\mbox{for~some}~s;   \\
0, ~~~~~~\mbox{otherwise}.
\end{cases}
\label{communication} 
\end{align}

Finally, we present Theorem \ref{Theorem 4}, which provides theoretical guarantees for achieving correctness, stability and approximate consensus (in Problem \ref{problem}).
\begin{theorem}
    With the observers in (\ref{Reset}),
	 the controllers for the explorers in \eqref{explorer Controller}, communication switching signal in (\ref{communication}), if each optimization is feasible in Algorithm \ref{MTLalg}, $V_T\in\Big(0, \frac{R-\eta}{S_{\textrm{max}}(C)}\Big]$, $\eta\in[0, R)$, and $\Lambda(\rho)S_{\max}\left(C\right)< R_{f}$,
	then Algorithm \ref{MTLalg} terminates within finite time, the MTL specification $\phi$ is weakly satisfied and the 
	explorers reach approximate consensus within the goal region in the sense that $\underset{t\to\infty}{\text{lim sup }}\left\Vert e_{i}\left(t\right)\right\Vert \leq \Lambda(\rho^{\ast})$, where $\rho^{\ast}=2\overline{d}_{i}S_{\max}\left(P\right)+2S_{\max}\left(PA\right)\bar{x}_{g}$.
	\label{Theorem 4} 
\end{theorem}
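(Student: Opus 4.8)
The plan is to verify, in sequence, the three requirements of Problem~\ref{problem} together with finite termination, bootstrapping off Theorems~\ref{Theorem 1}--\ref{Theorem 3} and Remarks~\ref{Remark 1}--\ref{Remark 2}. The pivot is to show that the closed loop produced by Algorithm~\ref{MTLalg} keeps $\|e_{1,i}(t)\|\le V_T$ for every $i\in F$ and all $t\ge 0$: once this holds, Theorem~\ref{Theorem 3} and Remark~\ref{Remark 1} deliver the uniformly ultimately bounded estimate $\limsup_{t\to\infty}\|e_i(t)\|\le\Lambda(\rho)$ (hence stability), the hypothesis $\Lambda(\rho)S_{\max}(C)<R_f$ forces the while-loop guard to fail in finite time (hence termination), and Remark~\ref{Remark 2} sharpens the ultimate bound to $\Lambda(\rho^{\ast})$ (hence approximate consensus). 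Correctness is obtained along the way.

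The first step is to record a \emph{forced-servicing} property: if $\|e_{1,i}(t)\|\le V_T$ and $\|\hat y_i(t)-y_0(t)\|\le\eta$ at some $t\in\mathbb{T}_d$, then a servicing of explorer $i$ actually occurs at $t$. Indeed $\|y_i(t)-y_0(t)\|\le\|y_i(t)-\hat y_i(t)\|+\|\hat y_i(t)-y_0(t)\|\le S_{\max}(C)\|e_{1,i}(t)\|+\eta\le S_{\max}(C)V_T+\eta\le R$, using $V_T\le (R-\eta)/S_{\max}(C)$; moreover $\zeta_i(t)=1$ by \eqref{communication} because $t$ is a discrete time at which $\|\hat y_i-y_0\|\le\eta$. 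Hence the servicing condition $(\|y_i-y_0\|\le R)\wedge(\zeta_i=1)$ holds, and by \eqref{Reset} $e_{1,i}$ is reset to $0_m$ at $t$.

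The core of the argument is an induction on the servicing instances (equivalently, on the discrete time index) proving simultaneously that $\|e_{1,i}(t)\|\le V_T$ for all $t$ and that the executed trajectory satisfies $\phi$. The base case is $e_{1,i}(0)=0_m$, which holds by $\hat x_i(0)=x_i(0)$. For the inductive step, every instance of MILP-sol is feasible by hypothesis and its optimizer enforces $(\tilde y^{\ell:\ell+N-1},0)\models_{\textrm{W}}[\phi]^{\ell}_{0}$, where $[\phi]^{\ell}_{0}$ has, via \eqref{update_phi}, already substituted the realized truth values of the atomic propositions along the executed prefix $\tilde y^{0:\ell}$. Since a servicing event triggers a re-solve in Algorithm~\ref{MTLalg} (the branch $\mathcal W\neq\emptyset$) and a re-solve is in any case forced at least every $N$ steps (the branch $\ell\ge\ell^{\ast}+N$), the executed trajectory coincides with a concatenation of committed plan-segments, each satisfying the weak-satisfaction constraint built on the \emph{true} history; as the re-solve indices tend to infinity, every bounded-response and invariance obligation of $\phi_{\textrm{m}}\wedge\phi_{\textrm{p}}=\phi$ eventually becomes ``determined'' inside a committed segment, so the executed trajectory satisfies $\phi$, which is the correctness claim. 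Specializing the $\phi_{\textrm{m}}$ part, consecutive discrete times at which $\|\hat y_i-y_0\|\le\eta$ are no more than the maximum dwell time apart (this is the role of $n_i$); combining this with the forced-servicing property and the inductive hypothesis $\|e_{1,i}\|\le V_T$, a servicing occurs at each such time, so $t_{s+1}^{i}-t_{s}^{i}\le\frac{1}{S_{\max}(A)}\ln\left(\tfrac{V_T S_{\max}(A)}{\kappa_i}+1\right)$ for every $s$, and Theorem~\ref{Theorem 1} gives $\|e_{1,i}(t)\|\le V_T$ on $[t_s^i,t_{s+1}^i]$. This closes the induction and establishes stability, $e_{1,i}\in\mathcal L_{\infty}$; then termination and the bound $\Lambda(\rho)$ follow as above, and Remark~\ref{Remark 2} (applicable since $\Lambda(\rho)S_{\max}(C)<R_f$) upgrades the ultimate bound to $\Lambda(\rho^{\ast})$ with $\rho^{\ast}=2\overline{d}_iS_{\max}(P)+2S_{\max}(PA)\overline{x}_g$, once the explorers enter the goal region and $e_{1,i}$ is driven to $0_m$ by the absolute state information available there.

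The main obstacle is this inductive step: it must break the circular dependence between the maximum dwell-time condition (needed to bound $e_{1,i}$ through Theorem~\ref{Theorem 1}), the forced-servicing property (which needs $\|e_{1,i}\|\le V_T$ to certify $\|y_i-y_0\|\le R$ at $\eta$-proximity), and the guarantee $\phi_{\textrm{m}}$ (which supplies $\eta$-proximity frequently enough, but whose satisfaction \emph{by the executed trajectory} must itself be extracted from the per-iteration weak-satisfaction constraints, bearing in mind that the executed estimate trajectories $\hat y_i$ contain resets that the individual MILP plans do not anticipate). Exploiting the precise re-solve schedule of Algorithm~\ref{MTLalg} --- so that the executed trajectory is a concatenation of committed plan-segments each consistent with the realized past --- is what makes the step go through.
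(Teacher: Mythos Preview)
Your proposal is correct and takes essentially the same approach as the paper: an induction showing that each $\eta$-proximity time supplied by $\phi_{\textrm{m}}$ is an actual servicing time (via the same triangle inequality $\|y_i-y_0\|\le S_{\max}(C)V_T+\eta\le R$), so the maximum dwell-time condition of Theorem~\ref{Theorem 1} is met and Theorem~\ref{Theorem 3} together with Remarks~\ref{Remark 1}--\ref{Remark 2} deliver termination and the $\Lambda(\rho^{\ast})$ bound. The paper frames the induction as $\hat t_s^{\,i}=t_s^{\,i}$ and simply asserts weak satisfaction of $\phi$ from feasibility of each MILP, while you package the induction on $\|e_{1,i}\|\le V_T$ and unpack the MTL-satisfaction step through the re-solve schedule; otherwise the two arguments coincide.
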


\begin{proof}
We first use induction to prove that $\hat{t}_{s}^{i}=t_{s}^{i}$ for each $i$ and $s$. For each $i$, if $s=0$, then $\hat{t}_{0}^{i}=t_{0}^{i}=0$. Now fix $i\in F$ and assume that $\hat{t}_{s}^{i}=t_{s}^{i}$ for some $s\in\mathbb{Z}_{\geq 0}$. We now show that $\hat{t}_{s+1}^{i}=t_{s+1}^{i}$. If each optimization is feasible in Algorithm \ref{MTLalg}, then $\hat{t}_{s+1}^{i}-\hat{t}_{s}^{i}=\hat{t}_{s+1}^{i}-t_{s}^{i}\le n_iT_s \leq\frac{1}{S_{\textrm{max}}\left(A\right)}\ln\left(\frac{S_{\textrm{max}}\left(A\right)V_T}{\kappa_{i}}+1\right)$. Then, following the analysis in the proof of Theorem \ref{Theorem 1}, $\left\Vert e_{1,i}\left(\hat{t}_{s+1}^{i}\right)\right\Vert \leq V_T$. 
Thus, we have $\left\Vert y_{i}\left(\hat{t}_{s+1}^{i}\right)-y_{0}\left(\hat{t}_{s+1}^{i}\right)\right\Vert \leq \left\Vert Cx_{i}\left(\hat{t}_{s+1}^{i}\right)-C\hat{x}_{i}\left(\hat{t}_{s+1}^{i}\right)\right\Vert + \left\Vert C\hat{x}_{i}\left(\hat{t}_{s+1}^{i}\right)-Cx_{0}\left(\hat{t}_{s+1}^{i}\right)\right\Vert\le S_{\textrm{max}}(C)V_T+\eta$. Therefore, if $V_T\le\frac{R-\eta}{S_{\textrm{max}}(C)}$, then $\left\Vert y_{i}\left(\hat{t}_{s+1}^{i}\right)-y_{0}\left(\hat{t}_{s+1}^{i}\right)\right\Vert \leq R$. According to the communication switching signals in (\ref{communication}), $\zeta_i(\hat{t}_{s+1}^{i})=1$. Thus, from the definition of $t_{s+1}^{i}$ in Section \ref{sec_sensing}, $\hat{t}_{s+1}^{i}=t_{s+1}^{i}$ holds. Therefore, $\hat{t}_{s}^{i}=t_{s}^{i}$ for each $i\in F$ and $s\in\mathbb{Z}_{\geq 0}$ by mathematical induction. 
	
If each optimization is feasible in Algorithm \ref{MTLalg}, then the MTL specification $\phi$ is weakly satisfied. With $\hat{t}_{s}^{i}=t_{s}^{i}$, the maximum dwell-time condition in (\ref{Maximum Dwell-Time Condition}) is satisfied for all $t_{s}^{i}$ and $i\in F$. From Theorem \ref{Theorem 3} and Remarks \ref{Remark 1} and \ref{Remark 2}, if $\Lambda(\rho)S_{\max}\left(C\right) < R_{f}$ holds, then, for each $i\in F$, there exists a time $T_i>0$ such that explorer $i$ will be inside the goal region for $t\ge T_{i}$. Thus, at time $\tilde{t}=\max_{i\in F}\{T_i\}$, $\norm{Cx_g-y_i(\tilde{t})}< R_f$ holds for any $i\in F$, i.e., Algorithm \ref{MTLalg} is guaranteed to terminate within finite time.                                           
Finally, if $V_T\in\Big(0, \frac{R-\eta}{S_{\textrm{max}}(C)}\Big]$, then according to Theorem \ref{Theorem 3} and Remark \ref{Remark 2}, we have $\underset{t\to\infty}{\text{lim sup }}\left\Vert e_{i}\left(t\right)\right\Vert \leq \Lambda(\rho^{\ast})$.
\end{proof}
                                                                                                                                  
\section{Implementation}
\label{sec_implementation}
We now demonstrate the controller synthesis approach on the example in Fig. \ref{fig_intro} (in Section \ref{sec_intro}). The relay agent is a quadrotor modeled as a 3-D six degrees of freedom (6-DOF) rigid body \cite{zhe_advisory}. We denote the system state as $x^0_{\rm{q}}=[p_{\rm{q}},\dot{p}_{\rm{q}},\theta_{\rm{q}},\Omega_{\rm{q}}]^{\textrm{T}}\in\mathbb{R}^{12}$, where $p_{\rm{q}}=[x_{\rm{q},1},x_{\rm{q},2},x_{\rm{q},3}]^{\textrm{T}}$ and $\dot{p}_{\rm{q}}=[\dot{x}_{\rm{q},1},\dot{x}_{\rm{q},2},\dot{x}_{\rm{q},3}]^{\textrm{T}}$ are the position and velocity vectors of the quadrotor. The vector $\theta_{\rm{q}}=[\alpha_{\rm{q}},\beta_{\rm{q}},\gamma_{\rm{q}}]^{\textrm{T}}\in\mathbb{R}^{3}$ includes the roll, pitch and yaw Euler angles of the quadrotor. The vector $\Omega_{\rm{q}}\in\mathbb{R}^{3}$ includes the angular velocities rotating around its body frame axes. The general nonlinear dynamic model of such a quadrotor is given by                              
\begin{align}\label{eqn_6DOFdynamics}       
\begin{split}                              
m_{\rm{q}}\ddot{p}_{\rm{q}}=& r(\theta_{\rm{q}})T_{\rm{q}}\mathbf{e}_3-mg\mathbf{e}_3,\nonumber
\end{split} 
\end{align}
\begin{align}      
\begin{split}
\dot{\theta}_{\rm{q}}=&H(\theta_{\rm{q}})\Omega_{\rm{q}},\\
I\dot{\Omega}_{\rm{q}}=&-\Omega_{\rm{q}}\times I\Omega_{\rm{q}}+\tau_{\rm{q}},
\end{split} 
\end{align}
where $m_{\rm{q}}$ is the mass, $g$ is the gravitational acceleration, $I$ is the inertia matrix, $r(\theta_{\rm{q}})$ is the rotation matrix representing the body frame with respect to the inertia frame (which is a function of the Euler angles), $H(\theta_{\rm{q}})$ is the nonlinear mapping matrix that projects the angular velocity $\Omega_{\rm{q}}$ to the Euler angle rate $\dot{\theta}_{\rm{q}}$, $\mathbf{e}_3=[0,0,1]^{\textrm{T}}$, $T_{\rm{q}}$ is the thrust of the quadrotor, and $\tau_{\rm{q}}\in\mathbb{R}^3$ is the torque on the three axes. The control input is $u_0=[u_{0, 1},u_{0, 2},u_{0, 3},u_{0, 4}]^{\textrm{T}}$, where $u_{0, 1}$ is the vertical velocity command, $u_{0, 2},u_{0, 3}$ and $u_{0, 4}$ are the angular velocity commands around its three body axes. By adopting the small-angle assumption and then linearizing the dynamic model around the hover state, a linear kinematic model can be obtained as
\begin{equation}
\begin{array}{ll}
\dot{x}_0=A_0x+B_0u_0, 
\end{array}
\end{equation}
where $x_0=[x_{\rm{q},1}, x_{\rm{q},2}, x_{\rm{q},3}, \dot{x}_{\rm{q},1}, \dot{x}_{\rm{q},2}, \alpha_{\rm{q}},\beta_{\rm{q}},\gamma_{\rm{q}}]^{\textrm{T}}$ is the state of the kinematic model of the quadrotor (relay agent), $A_0\in\mathbb{R}^{8\times8}$, and $B_0\in\mathbb{R}^{8\times4}$. For the 3-D position representation, $y_0=[x_{\rm{q},1}, x_{\rm{q},2}, x_{\rm{q},3}]^{\textrm{T}}$.

The dynamics of the differential drive of explorer $i$ can be feedback linearized into the following equations (see Section V of \cite{zhe_advisory}):
\begin{align}
\begin{bmatrix}
\ddot{x}_{i,1} \\ \ddot{x}_{i,2}
\end{bmatrix} =
\begin{bmatrix}
u_{i,1} \\ u_{i,2}
\end{bmatrix},
\end{align}
where $u_{i,1}$ and $u_{i,2}$
are the inputs in the 2-D plane.

For the state space representation, $x_i=[x_{i,1}, x_{i,2}]^{\textrm{T}}$, $u_i=[u_{i,1}, u_{i,2}]^{\textrm{T}}$, $d_i=[d_{i,1}, d_{i,2}]^{\textrm{T}}$ and $y_i=[x_{i,1}, x_{i,2}, 0]^{\textrm{T}}$. The initial 3-D positions of the three explorers are $[-100,-100,0]^{\textrm{T}}$, $[100,150,0]^{\textrm{T}}$ and $[150,-150,0]^{\textrm{T}}$, respectively. The vertical positions of the explorers are all zero. The initial 3-D position of the relay agent is $[-25,-150,5]^{\textrm{T}}$. The consensus state $x_g$ is set as $[0, 0,0]^{\textrm{T}}$. The random disturbance $d_{i}(t)$ is a vector whose elements
are drawn at each time step $t$ from a standard uniform distribution
centered about the origin spanning $[-0.5\bar{d}_i, 0.5\bar{d}_i]$ for all $i\in F$, where $\bar{d}_1=0.04, \bar{d}_2=0.03$, and $\bar{d}_3=0.02$.

For approximate consensus, we consider the controller of the explorers in \eqref{explorer Controller}, where $P$ is as follows (computed from (\ref{ARE}) with $k=0.1$).\\
\vspace{-0.05in}
\begin{equation}
P=\begin{bmatrix} 
0.23 & 0 & 0.22 & 0\\
0 & 0.23  & 0 & 0.22 \\
0.22 & 0 & 0.52 & 0 \\
0 & 0.22 & 0 & 0.52
\end{bmatrix}.
\end{equation}

We consider two different scenarios as follows.
    \begin{table}[]
		\centering
		\caption{MTL specifications $\phi_{\textrm{p}}$ and results in Scenario I.}
		\begin{tabular}{ll>{\raggedright\arraybackslash}p{20mm}}
		\toprule[2pt]    
			 ~~~~~MTL specification $\phi_{\textrm{p}}$ & \tabincell{c}{ cumulative\\control effort} \\ \hline
		    \tabincell{c}{$\phi_{\textrm{p},\textrm{I}}^1=\Box\Diamond_{[0, 20]}\big((y_0\in G_1)\vee (y_0\in G_2)\big)$\\ $\wedge\Box (y_0\in D)$}   & 70,943.32   \\ 
		    \tabincell{c}{$\phi_{\textrm{p},\textrm{I}}^2=\Box\Diamond_{[0, 10]}\big((y_0\in G_1)\vee (y_0\in G_2)\big)$\\ $\wedge\Box (y_0\in D)$}  & 101,079.03 \\ 
		     \tabincell{c}{$\phi_{\textrm{p},\textrm{I}}^3=\Box\Diamond_{[0, 6]}\big((y_0\in G_1)\vee (y_0\in G_2)\big)$\\ $\wedge\Box (y_0\in D)$} & 165,224.55 \\ \bottomrule[2pt]       
		\end{tabular}
		\vspace{-3mm}
		\label{result_TL1}  
	\end{table}
	
    \begin{table}[]
		\centering
		\caption{MTL specifications $\phi_{\textrm{p}}$ and results in Scenario II.}
		\begin{tabular}{ll>{\raggedright\arraybackslash}p{20mm}}
		\toprule[2pt]    
			 ~~~~~MTL specification $\phi_{\textrm{p}}$ & \tabincell{c}{cumulative\\control effort} \\ \hline
		    \tabincell{c}{$\phi_{\textrm{p},\textrm{II}}^1=\Box\Diamond_{[0, 6]}\big((y_0\in G_1)\vee (y_0\in G_2)\big)$\\ $\wedge\Box (y_0\in D)\wedge\lnot\Diamond\Box_{[0,1]} (y_0\in E)$}   & 644,670.20   \\ 
		    \tabincell{c}{$\phi_{\textrm{p},\textrm{II}}^2=\Box\Diamond_{[0, 6]}\big((y_0\in G_1)\vee (y_0\in G_2)\big)$\\ $\wedge\Box (y_0\in D)\wedge\lnot\Diamond\Box_{[0,2]} (y_0\in E)$}  & 165,984.72 \\ 
		     \tabincell{c}{$\phi_{\textrm{p},\textrm{II}}^3=\Box\Diamond_{[0, 6]}\big((y_0\in G_1)\vee (y_0\in G_2)\big)$\\ $\wedge\Box (y_0\in D)\wedge\lnot\Diamond\Box_{[0,3]} (y_0\in E)$} & 165,241.50 \\ \bottomrule[2pt]       
		\end{tabular}
		\vspace{-5mm}
		\label{result_TL2}  
	\end{table}

\noindent\textbf{Scenario 1}:
We consider three MTL specifications for $\phi_{\textrm{p}}$ as shown in Table \ref{result_TL1}. The relay agent needs to reach the charging station $G_1$ or $G_2$ at least once in any $6T_s$ time, and it should always remain in region $D$, where the two charging stations $G_1$ and $G_2$ are rectangular cuboids with length, width and height being 10, 10 and 5, centered at $[-100,50, 2.5]^{\textrm{T}}$ and $[125,0, 2.5]^{\textrm{T}}$, respectively. The region $D$ is a rectangular cuboid centered at $[0,0,7]^{\textrm{T}}$ with length, width and height being 300, 300 and 6, respectively (see Fig. \ref{fig_intro}). 


\noindent\textbf{Scenario 2}:
We consider three MTL specifications for $\phi_{\textrm{p}}$ as shown in Table \ref{result_TL2}. The relay agent needs to reach the charging station $G_1$ or $G_2$ at least once in any $6T_s$ time, always remain in region $D$ (same as in Scenario 1), and never stay in region $E$ for over $2T_s$ time, where the region $E$ is a rectangular cuboid centered at $[0,0,6]^{\textrm{T}}$ with length, width and height being 75, 75 and 4, respectively. 


We set $R_f=R=5$, $V_T=1$,  $\eta=4$, $T_s=0.5$ and $N=20$. Fig. \ref{plot1} shows the simulation results in Scenario 1. We observe that the obtained control inputs of the relay agent gradually decrease as the explorers approach the goal region, $\Vert e_{1,i}(t)\Vert$ is uniformly bounded by $V_T=1$, and $\Vert e_{2,i}(t)\Vert$ gradually decreases and then oscillates when the explorers approach approximate consensus to the goal region. We measure the \textit{cumulative control effort} as $\sum_{j=0}^{\bar{N}}\norm{u_0^j}$, where $\bar{N}$ denotes the minimal time index such that $\norm{Cx_g-y_i(t[\bar{N}])}\le R_f$ for all $i\in F$. The results as shown in Table \ref{result_TL1} also show that the cumulative control effort for satisfying $\phi_{\textrm{p},\textrm{I}}^3$ is more than that for satisfying $\phi_{\textrm{p},\textrm{I}}^2$, which is still more than that for satisfying $\phi_{\textrm{p},\textrm{I}}^1$. This is consistent with the fact that $\phi_{\textrm{p},\textrm{I}}^2$ implies $\phi_{\textrm{p},\textrm{I}}^1$, and $\phi_{\textrm{p},\textrm{I}}^3$ implies both $\phi_{\textrm{p},\textrm{I}}^1$ and $\phi_{\textrm{p},\textrm{I}}^2$. 

Fig. \ref{plot2} shows the simulation results in Scenario 2. We observe that $\Vert e_{1,i}(t)\Vert$ is uniformly bounded by $V_T=1$ and $\Vert e_{2,i}(t)\Vert$ gradually decreases and then oscillates when the explorers approach approximate consensus to the goal region. The results as shown in Table \ref{result_TL2} also show that the cumulative control effort for satisfying $\phi_{\textrm{p},\textrm{II}}^1$ is more than that for satisfying $\phi_{\textrm{p},\textrm{II}}^2$, which is still more than that for satisfying $\phi_{\textrm{p},\textrm{II}}^3$. This is consistent with the fact that $\phi_{\textrm{p},\textrm{II}}^2$ implies $\phi_{\textrm{p},\textrm{II}}^3$, and $\phi_{\textrm{p},\textrm{II}}^1$ implies both $\phi_{\textrm{p},\textrm{II}}^2$ and $\phi_{\textrm{p},\textrm{II}}^3$. We also observe that more cumulative control effort is needed in Scenario 2 to satisfy the MTL specifications after the explorers arrive in region $E$ as the relay agent needs to get away from $E$ after each service to the explorers. Videos of the simulations in both scenarios are available in the CoppeliaSim environment\footnote{CoppeliaSim videos can be found at https://tinyurl.com/y32xgmtx.}.


\begin{figure}
	\centering
	\includegraphics[scale=0.35]{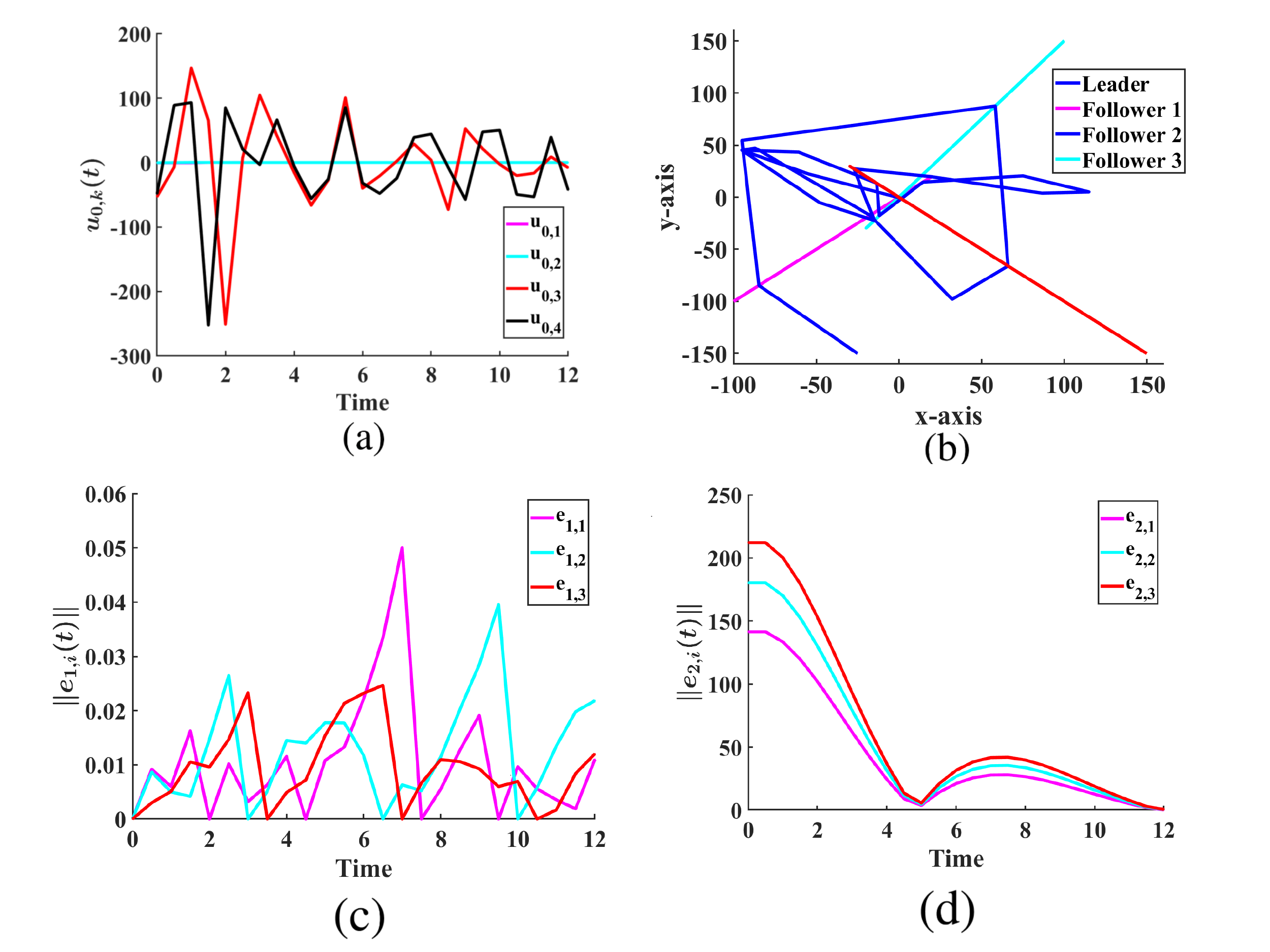}
	\caption{Results with MTL specification $\phi^1_{\textrm{p},\textrm{I}}$ for the practical constraints: (a) the obtained optimal inputs for the relay agent; (b)
	2-D planar plot of the trajectories of three explorers and a relay agent; (c) $\Vert e_{1,i}(t)\Vert$; (d) $\Vert e_{2,i}(t)\Vert$.}  
	\label{plot1}
	\vspace{-6mm}
\end{figure}

\begin{figure}
	\centering
	\includegraphics[scale=0.35]{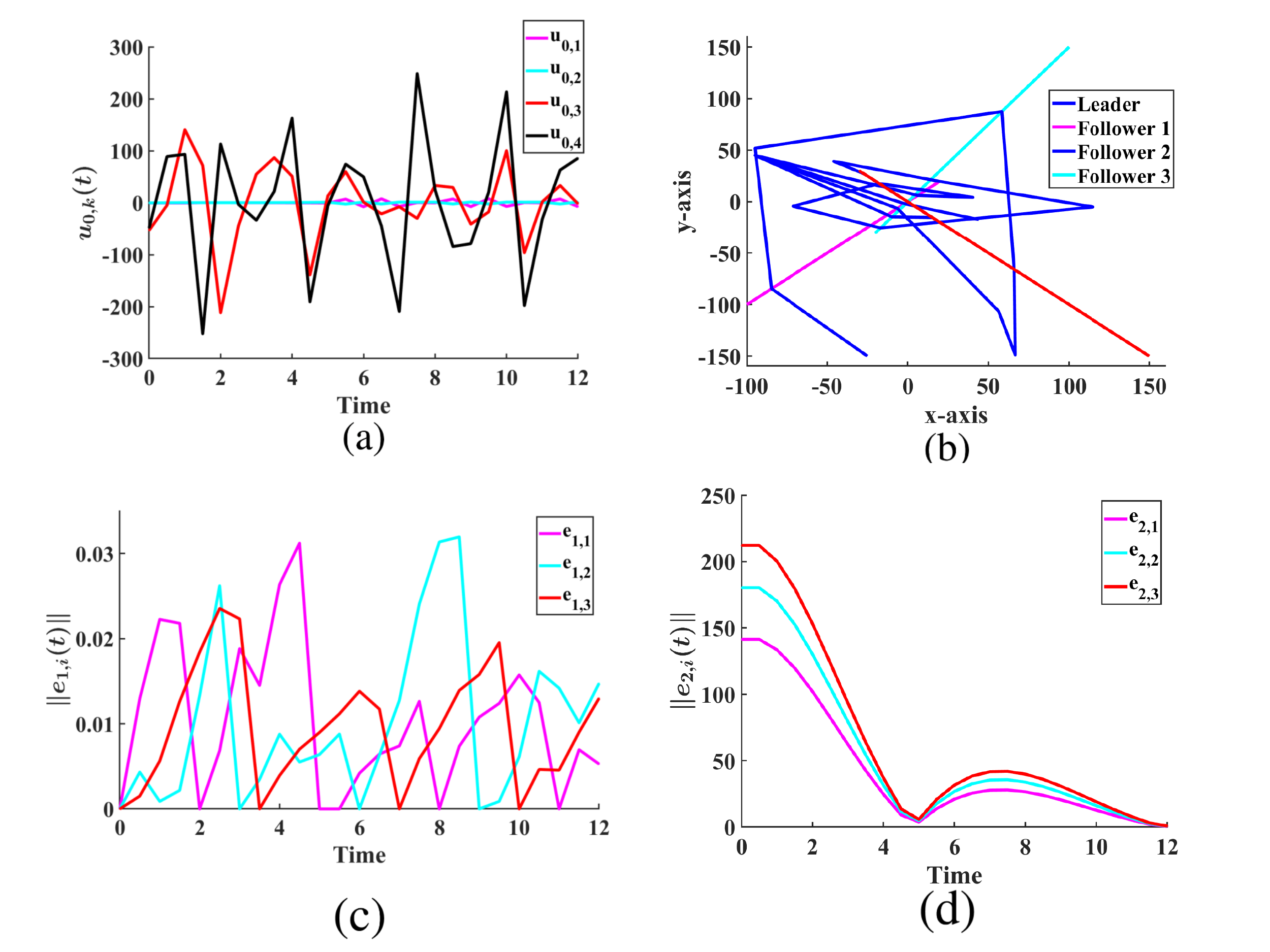}
	\caption{Results with MTL specification $\phi^1_{\textrm{p},\textrm{II}}$ for the practical constraints: (a) the obtained optimal inputs for the relay agent; (b)
	2-D planar plot of the trajectories of three explorers
	and a relay agent; (c) $\Vert e_{1,i}(t)\Vert$; (d) $\Vert e_{2,i}(t)\Vert$.}  
	\vspace{-7mm}
\label{plot2}
\end{figure}

\section{ Conclusion}
\label{conclusion}
We present a metric temporal logic approach for the controller synthesis of
a multi-agent system with
intermittent communication. We iteratively solve a sequence of mixed-interger linear programming problems for provably achieving correctness, stability of the switched system and approximate consensus of the explorers. Since the explorers are specified to reach
approximate consensus in this paper, we will investigate scenarios where controller synthesis
can be also conducted for the explorers with more complex
specifications.



\bibliographystyle{IEEEtran}
\bibliography{zherefclean_submit}

\end{document}